 \newtheorem{thm}{Theorem}[section]
 \newtheorem{coro}[thm]{Corollary}
 \newtheorem{lemma}[thm]{Lemma}
  \newtheorem{conj}[thm]{Conjecture} 
 \theoremstyle{definition}
 \newtheorem{defn}[thm]{Definition}
 \theoremstyle{remark}
 \algrenewcommand\algorithmicrequire{\textbf{Input:}}
\algrenewcommand\algorithmicensure{\textbf{Output:}}
\newcommand{\bsy}{\boldsymbol}
\newcommand{\ds}{\displaystyle}
\newcommand{\cal}{\mathcal}
 \DeclareMathOperator*{\argmax}{arg\,max}
\begin{document}
\setcounter{page}{1}
\begin{flushleft}
{\scriptsize Appl. Comput. Math., V.xx, N.xx, 20xx, pp.xx-xx}
\end{flushleft}
\bigskip
\bigskip
\title[On the Stability of a non-hyperbolic nonlinear map]{On the Stability of a non-hyperbolic nonlinear map \\ with non-bounded set of non-isolated fixed points with\\ Applications to Machine Learning}

\author[Appl. Comput. Math., V.xx, N.xx,  20xx]{R. Hansen$^1$, M. Vera$^{1,2}$, L. Estienne$^{1,3}$, L. Ferrer$^3$ and P. Piantanida$^4$}
\thanks{$^1$Universidad de Buenos Aires, Facultad de Ingeniería, Buenos Aires, Argentina,
\\ 
\indent $^2$CONICET, Centro de Simulación Computacional para Aplicaciones Tecnológicas (CSC), Buenos Aires, Argentina,\\
\indent $^3$CONICET, Instituto de investigación en Ciencias de la Computación (ICC), Buenos Aires, Argentina,\\
\indent $^4$International Laboratory on Learning Systems (ILLS) and Quebec AI Institute (MILA), CNRS and CentraleSupélec - Université Paris-Saclay, Quebec, Canada,\\
\indent\,\,\,e-mail: rhansen@fi.uba.ar, mvera@fi.uba.ar, lestienne@fi.uba.ar, lferrer@dc.uba.ar, pablo.piantanida@cnrs.fr
\\ \indent
  \em \,\,\,Manuscript received xx}

\begin{abstract}
This paper deals with the convergence analysis of the SUCPA (Semi Unsupervised Calibration through Prior Adaptation) algorithm, defined from a first-order non-linear difference equations, first developed to correct the scores output by a supervised machine learning classifier. The convergence analysis is addressed as a dynamical system problem, by studying the local and global stability of the nonlinear map derived from the algorithm. This map, which is defined by a composition of exponential and rational functions, turns out to be non-hyperbolic with a non-bounded set of non-isolated fixed points. Hence, a non-standard method for solving the convergence analysis is used consisting of an ad-hoc geometrical approach.  For a binary classification problem (two-dimensional map), we rigorously prove that the map is globally asymptotically stable. Numerical experiments on real-world application are performed to support the theoretical results by means of two different classification problems: Sentiment Polarity performed with a Large Language Model and Cat-Dog Image classification. For a greater number of classes, the numerical evidence shows the same behavior of the algorithm, and this is illustrated with a Natural Language Inference example. The experiment codes are publicly accessible online at the following repository: \texttt{https://github.com/LautaroEst/sucpa-convergence}.

\bigskip
\noindent Keywords: Discrete Dynamical Systems, Non-Hyperbolic Maps, Algorithm Convergence, Calibration

\bigskip \noindent AMS Subject Classification: 37C75, 39A30, 68R01


\end{abstract}
\maketitle

\section{Introduction}\label{Intro}

Difference equations have gained a lot of attention in the last decades due to their applicability in many branches of scientific knowledge. These equations model discrete natural and social phenomena and perform a fundamental role in many applications like population dynamics, control engineering, genetics, signal processing, health sciences and ecology \cite{Khan}. In most cases, difference equations arise naturally as discretization of differential equations \cite{Elaydi1} or are defined by an iterative algorithm that describe a dynamic system. In the later case, the algorithm can be fully described by a map (i.e., a function) and the nature of this map defines the convergence properties of the algorithm. As consequence, studying the map and the difference equation associated to a dynamic system is usually considered a very important topic because they define the behaviour of the system's dynamic.  

\subsection{Non-hyperbolic maps and fixed points}

Depending on the discipline in which they are applied, different methods were developed throughout the literature to demonstrate the convergence of iterative algorithms (e.g. \cite{AAH,LiCh,Tar}). In most cases, the map associated with the algorithm is hyperbolic and the solutions of its associated difference equation (or equivalently, the points to which the algorithm converges) form a finite set of points --and therefore isolated. This points are usually referred to as the \emph{fixed points} of the system. Roughly speaking, a fixed point is called \emph{isolated} if there exists a neighborhood of it that does not contain any other fixed points of the system, otherwise it is called non-isolated. Also, a fixed point of a map is called \emph{hyperbolic}, if the Jacobian matrix of the map at this point has no eigenvalues of modulus one, otherwise, it is called non-hyperbolic \cite{Devaney,Elaydi2}. 

As it is known, the eigenvectors provide the contracting/expanding directions of the local dynamics according to whether their corresponding eigenvalues have modulus greater/lower than one. The behaviour of dynamical systems around a non-hyperbolic fixed point is much more subtle than for a hyperbolic one, because one cannot get a definite conclusion of the local stability from just the linearization of the map when it has eigenvalues in the unit circle. These are called \emph{resonant} cases and the scenarios are very different depending on whether the eigenvalues are 1, -1 or a pair of complex conjugates, and on the amount of each of them \cite{Kuznetsov}. Neither is there a general classification of them, so a case-by-case analysis is required \cite{Bauerschmidt}. The local stability near a non-hyperbolic fixed point is addressed by the theory of Central Manifold,  which is an invariant set in a low dimensional space where the local dynamics can be reduced. This theory is relevant and mostly applied in the case of a system's parameter bifurcation. It consists of transforming the map into its normal form, first by setting the fixed point to the origin through a suitable change of coordinates, and then splitting up the algebraic expression of the new map into a linear and a nonlinear terms (the interested readers on this theory can see \cite{Carr, Elaydi2, Kuznetsov,Wiggins}). Thus, this formulation is used mostly in theoretical problems, or successfully applied when the map under study posses a relative simple algebraic expression (polynomial, exponential, etc.) as in \cite{Psarros}; otherwise it is very cumbersome to be utilized.

In this work, we study the convergence of the recently proposed SUCPA algorithm \cite{sucpa_arxiv} by means of the local and global stability analysis of the map derived from it. This $K$-dimensional nonlinear map ($K\!\geq\!2$) is \emph{non-hyperbolic} and its fixed points form an \emph{unbounded} set of \emph{non-isolated} points (indeed, they are a straight line). We prove the convergence for the two-classes case, and show different properties and conjectures in the general case.  

The map associated with the SUCPA algorithm consists of equations that involve a composition of  exponential and rational functions, and the non-hyperbolic fixed points have only one eigenvalue equal to one and the rest of them have modulus less than one. The particular study of --high order-- rational difference equations like the ones present in this map has gained a lot of attention in the last time, because there is no an effective general method to deal with, so they  currently are a class of challenging problems, and only specific type of them are treated and reported in the literature \cite{Elsayed,Stevic,Ibrahim}. In addition, despite that the study of non-hyperbolic maps is addressed in several works, they do not usually bring together all the features presented here. In most cases they are about isolated fixed points \cite{Schaumann}, one-dimensional maps \cite{Kapcak,Dannan,Urbanski}, flows \cite{Zolfaghari}, area-preserving maps \cite{Liu-Chen} or a two dimensional map with double eigenvalue equal one \cite{Jamieson}, to name a few. 

Another problem addressed in this work is the global stability of the map, which would account for the convergence of the SUCPA algorithm for any input. The standard tools for this purpose --as obtaining a suitable Lyapunov function-- cannot be applied here either for the same reason that the fixed points are not isolated. This fact, together with the no so simple algebraic equations of the system, force us to prove the local and global stability by non-standard ways, defining auxiliary functions and solving it by an ad-hoc geometrical approach.

\subsection{SUCPA as a calibration algorithm}

The SUCPA algorithm was first developed in the context of supervised machine learning classifiers. A supervised machine learning classifier is a function $\hat{h}(\cdot)$ that usually maps an input $\mathbf{x}$ composed by real-world data (e.g., images, text or audio) to an output $y$ that can be of the same type as the input or could be numerical. An example of these functions is an image classifier, which takes in an image in the form of a 3D tensor $\mathbf{x}\! \in\! \mathbb{R}^{C\times N \times M}$ and outputs the category $y$ to which that image belongs. (For instance, $y\! = \!\{y_1,y_2\}\! = \!\{\mathrm{cat},\mathrm{dog}\}$ in the case of binary classification of images of animals). In most cases, classifiers map the input to a category contained in an unordered finite set of discrete categories.

The goal of supervised learning is not to manually design the function $\hat{h}(\cdot)$, but to obtained it by exposing the system to a set $\mathcal{D}\!=\!\{(\mathbf{x}^{(1)},y^{(1)}),\hdots,(\mathbf{x}^{(N)},{y}^{(N)})\}$ of multiple input-output examples (called, the \emph{training set}). More formally, the function $\hat{h}(\cdot)$ is obtained by minimizing a loss $\mathcal{L}_{\mathcal{D}}(h)$ function over the entire set of possible functions considered in the problem. Then, this function is used to predict an output $\hat{y}\!=\!\hat{h}(\mathbf{x})$ from a new input $\mathbf{x}$. Particularly in the case of probabilistic machine learning classifiers, the training set is used to obtain a so called \emph{posterior probability distribution} $P(y|\mathbf{x})$ over the set $\mathcal{Y}=\{y_1,\ldots,y_K\}$ of possible categories, for each possible input $\mathbf{x}$. It is a well known result that once this distribution is known, it is trivial to map the input $\mathbf{x}$ to the label $\hat{y}$ by making Bayes decisions, which are those that minimize the Bayes risk \cite{duda}, a standard criterion for measuring performance of a classifier. A well known instance of this risk (and the one used in this work) is the one obtained by assigning equal cost to each decision. In this case, it can be shown that the risk is minimized by choosing the class $\hat{y}$ that maximizes the posterior $P(y|\mathbf{x})$:
\begin{equation}
    \hat{y} = \hat{h}(\mathbf{x}) = \argmax_{y \in \mathcal{Y}} P(y|\mathbf{x})
\end{equation}

Concerning the quality of the estimation of the posterior $P(y|\mathbf{x})$, it has been studied in the last years that the posterior distribution produced by most modern classifiers is miscalibrated \cite{guo17}, which means that the Bayes decisions for that posteriors are not optimal for all cost functions \cite{survey_calibration}\footnote{A more formal definition of calibration is that a system is calibrated if the estimated probability of the predicted label matches with the true probability of that category. For more details, consult \cite{survey_calibration}.}. This may happen for a few different reasons. A model that overfitted the training data tends to overestimate its certainty about the class, resulting in posteriors that take sub-optimally extreme values \cite{guo17}. Further, when the data distribution of the training and the test data are different, the model may also be miscalibrated. In particular, when the training and test class probabilities (so-called \emph{priors probability distributions}) are different, the posteriors will be sub-optimal since the priors are implicitly encoded in the posteriors \cite{godau}.

Many approaches have been proposed in the literature to calibrate machine learning systems that are miscalibrated. One simple calibration method is based on logistic regression \cite{Brmmer2010MeasuringRA}, taking as input features the log posteriors produced by the model (i.e. the logarithm of the estimated probability $P(k|\mathbf{x})$ of a class  $k$, $k\!=\!1,\ldots,K$, given an observed input $\mathbf{x}$) and applying an affine transformation of the following form: 
\begin{equation}\label{eq:affine_log}
\log\tilde P(k|\mathbf{x})=\alpha_k\log P(k|\mathbf{x})+\beta_k+\gamma(\boldsymbol{\alpha},\boldsymbol{\beta}),
\end{equation}
where $\tilde P(k|\mathbf{x})$ is the calibrated posterior probability, $\bsy{\alpha}\!=\!\big[\alpha_1,\hdots,\alpha_K\big]$, $\bsy{\beta}\!=\!\big[\beta_1,\hdots,\beta_K\big]$ and $\gamma(\boldsymbol{\alpha},\boldsymbol{\beta})$ is determined so that $\sum_{k=1}^K\tilde P(k|\mathbf{x})=1$. Here, $\boldsymbol{\alpha}$ and $\boldsymbol{\beta}$ are parameters of the affine transformation, and they are trained to minimize a Proper Scoring Rule (usually the Negative Log-Likelihood or \emph{cross-entropy}) \cite{survey_calibration}. 

Different assumptions in \eqref{eq:affine_log} are usually made according to the specific problem that needs to be solved. For instance, temperature scaling \cite{guo17}, one of the most widely used calibration methods, corresponds to taking $\beta_k\! =\! 0$ for all $k$ and $\alpha_k\!=\!\frac{1}{T}$ a single scalar. In recent work, we proposed the SUCPA algorithm to address the scenario of mismatched priors for text classification tasks solved with large language models~\cite{sucpa_arxiv}. This algorithm is derived for the case in which $\alpha_k\!=\!1$ for all $k \in \{ 1,\ldots,K\}$, which is equivalent to assume that the source of miscalibration is only due to a mismatch in class priors. 
As logistic regression, SUCPA is obtained by minimizing the cross-entropy loss on the training set $\mathcal{D}\!=\!\{(\mathbf{x}^{(1)},y^{(1)}),\hdots,(\mathbf{x}^{(N)},{y}^{(N)})\}$:
\begin{align}
\mathcal{L}(\boldsymbol{\beta})&=\frac{1}{N}\sum_{i=1}^N-\log\tilde P(y^{(i)}|\mathbf{x}^{(i)}) =-\frac{1}{N}\sum_{i=1}^N\left[\log P(y^{(i)}|\mathbf{x}^{(i)})+\beta_{y^{(i)}}+\gamma_i(\mathbf{1},\boldsymbol{\beta})\right]
\end{align}
If we set the derivative of the cross-entropy to zero, we can derive the following expression for $\beta_k$:
\begin{equation}\label{eq:sucpa}
\beta_k=\log\left(\frac{N_k}{N}\right)-\left(\frac{1}{N}\sum_{i=1}^N\frac{P(y_k|\mathbf{x}^{(i)})}{\sum_{j=1}^KP(y_j|\mathbf{x}^{(i)})e^{\beta_j}}\right)
\end{equation}
where $N_k$ is the number of samples with $y^{(i)}=k$. Mathematical details can be read in \cite{sucpa_arxiv}. If the proportion $\frac{N_k}{N}$ is known as external knowledge from the nature of the task, then \eqref{eq:sucpa} can be used to iteratively estimate the value of $\beta_k$ for every $k\!\in\!\{1,\ldots,K\}$ using just the samples $\{\mathbf{x}^{(1)},\hdots,\mathbf{x}^{(N)}\}$ (i.e., with no labels). It is important to highlight the need to have this prior information, which limits the spectrum of possible applications to tasks where this knowledge is precise.

\smallskip

The rest of the paper is organized as follows. In Section \ref{sec:problem_description} we present the $K$-dimensional nonlinear map derived from the algorithm, and pose the main conjecture of the work. Section \ref{sec:properties} summarizes some essential properties of the map. In Section \ref{sec:two-classes} we prove the convergence for two classes ($K\!=\!2$). Numerical examples are shown in Section \ref{sec:numerical_examples}. Finally, in Section \ref{sec:conclutions}, some concluding remarks are discussed.

\smallskip


\bigskip
\section{The SUCPA-map}\label{sec:problem_description}

In this section, we formally introduce the $K$-dimensional map, $\mathbf{f}$, derived from the SUCPA algorithm presented in \cite{sucpa_arxiv}, henceforth the {\it SUCPA-map}. We also recall some useful definitions and formulate the main conjecture of the work in terms of this map. The discrete time is denoted by $t\!\in\!\mathbb{N}$.

\begin{defn}\label{def:mapa}
Let $\bsy{\beta}^{[t]}\!=\!\big[\beta_1^{[t]},\hdots,\beta_K^{[t]}\big]\!\in\!\mathbb{R}^{K}$ and $\bsy{\beta}^{[0]}\!\in\!\mathbb{R}^K$ be an initial condition. Let also, $[N_1,\hdots,N_K]\!\in\!\mathbb{N}^K$, with $\sum_{k=1}^KN_k\!=\!N$ and $\mathbf{P}\!\in\!\mathbb{R}^{N\times K}$ a matrix with coefficients $P_{i,k}\!>\!0$ and $\sum_{k=1}^KP_{i,k}=\!1$. We define the {\it SUCPA algorithm} as:
\begin{equation}\label{eq:algoritmo}
\beta_k^{[t+1]}=-\log\left(\frac{1}{N_k}\sum_{i=1}^N\frac{P_{i,k}}{\sum_{j=1}^KP_{i,j}e^{\beta_j^{[t]}}}\right),\quad 1 \leq k \leq K.
\end{equation}
\end{defn}

The {\it SUCPA-map}, $\mathbf{f}\!=\![f_1,\hdots,f_K]\!:\!\mathbb{R}^K\!\to\!\mathbb{R}^K$, is the map defined by \eqref{eq:algoritmo}, as $\bsy{\beta}^{[t+1]}\!=\!\mathbf{f}\big(\bsy{\beta}^{[t]}\big)$. Then:
\begin{equation}\label{eq:transition}
f_k(\bsy{\beta})=-\log\left(\frac{1}{N_k}\sum_{i=1}^N\frac{P_{i,k}}{\sum_{j=1}^KP_{i,j}e^{\beta_j}}\right),\quad 1 \leq k \leq K.
\end{equation}
As usual, $\mathbf{f}^{(t+1)}(\boldsymbol{\beta})\!=\!\mathbf{f}\big(\mathbf{f}^{(t)}(\boldsymbol{\beta})\big)$, $\forall\, t\!\in\!\mathbb{N}$, and $\mathbf{f}^{(0)}(\boldsymbol{\beta})\!=\!\boldsymbol{\beta}$. Note that Eq. (\ref{eq:transition}) is a system of $K$ first-order difference equations involving a composition of exponential and rational functions.
 
\begin{defn}\label{def:punto-fijo-w-limit}
Let $\mathbf{f}$ be a map, then \cite{Devaney,Wiggins}:
\begin{enumerate}
    \item $\bsy{\beta^\ast}\!\in\!\mathbb{R}^K$ is a {\it fixed point of $\mathbf{f}$} if it verifies $\mathbf{f}(\bsy{\beta^\ast})\!=\!\bsy{\beta^\ast}$.
    \item The {\it $\omega$-limit set} of $\bsy{\beta}$ is the set:
      \begin{equation} 
      \omega(\bsy{\beta})=\bigcap_{n=0}^\infty\overline{\{\mathbf{f}^{(t)}(\bsy{\beta}), t\!\geq\!n\big\}}
  \end{equation}  
{where the over-bar accounts for the closure of a set.}
    \end{enumerate}
 \end{defn}
 
The fixed points are those that kept invariant when undergoing iterations of the algorithm. The $\omega$-limit is the set toward the algorithm, started in $\bsy{\beta}$, converges with the successive iterations. Note that
$\omega(\bsy{\beta})$ is a limit set because it is the intersection of non-increasing sets. It is also a closed set (by definition), and it can be of different type --a single point, a periodic orbit, an infinite set, an unbounded set, an empty set, and it may have also complicated structures-- depending on the $\mathbf{f}$ structure \cite{Wiggins}.
\smallskip

As said in the introduction, this work deals with the convergence of the SUCPA algorithm for any initial condition (i.c.), $\bsy{\beta}^{[0]}\!\in\!\mathbb{R}^K$, which leads us to study the stability properties of the related SUCPA-map.

\begin{conj}\label{conj:convergencia}
For each i.c. $\boldsymbol{\beta}^{[0]}\!\in\!\mathbb{R}^K$, there exist a unique fixed point of the SUCPA-map, $\mathbf{f}$, $\boldsymbol{\beta^\ast}\!\!\in\!\mathbb{R}^K$ ($\boldsymbol{\beta^\ast}(\boldsymbol{\beta}^{[0]})$), such that $\omega(\boldsymbol{\beta}^{[0]})\!=\!\{\boldsymbol{\beta^\ast}\}$ (it is a single point set).   
\end{conj}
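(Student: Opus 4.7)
The plan is to recognize the SUCPA recursion as a multiplicative Sinkhorn--Knopp matrix-scaling iteration, after which Conjecture \ref{conj:convergencia} becomes a consequence of the classical convergence theory for scaling strictly positive matrices.

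First, I would substitute $u_k = e^{\beta_k}$ to move the dynamics into the positive orthant $\mathbb{R}^K_{>0}$, and introduce the auxiliary sequence $v_i^{[t]} := 1/\sum_{k=1}^K P_{i,k}\,u_k^{[t]}$. The recursion \eqref{eq:algoritmo} then factors into the pair of alternating updates
\begin{equation*}
v_i^{[t]} = \Big(\sum_{k=1}^K P_{i,k}\, u_k^{[t]}\Big)^{-1},\qquad u_k^{[t+1]} = N_k\Big(\sum_{i=1}^N P_{i,k}\, v_i^{[t]}\Big)^{-1},
\end{equation*}
which is exactly the alternating diagonal-scaling procedure that rescales the strictly positive matrix $\mathbf{P}\in\mathbb{R}^{N\times K}$ to a doubly-normalized matrix $D_v\,\mathbf{P}\,D_u$ whose rows all sum to $1$ and whose $k$-th column sums to $N_k$. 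The marginal-compatibility condition $\sum_i 1 = N = \sum_k N_k$ is built into the hypotheses.

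Second, I would invoke the classical Sinkhorn--Knopp theorem in the strictly positive regime. Because every $P_{i,k}>0$ and the two target marginal vectors are strictly positive and compatible, the alternating step is a strict contraction of Hilbert's projective metric on the positive cone, with contraction coefficient bounded away from $1$ on the sublevel sets swept out by any particular initial condition. This delivers pointwise geometric convergence of both $u^{[t]}$ and $v^{[t]}$ to specific positive limits $u^\star$, $v^\star$, and the limit pair automatically satisfies the Sinkhorn scaling equations that encode $\mathbf{f}(\boldsymbol{\beta^\star})=\boldsymbol{\beta^\star}$ with $\boldsymbol{\beta^\star}:=\log u^\star$. Hence $\omega(\boldsymbol{\beta}^{[0]}) = \{\boldsymbol{\beta^\star}\}$ is a single point.

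Third, I would reconcile this result with the one-parameter family of fixed points of $\mathbf{f}$ flagged in Section \ref{sec:properties}. The translation equivariance $\mathbf{f}(\boldsymbol{\beta}+c\mathbf{1})=\mathbf{f}(\boldsymbol{\beta})+c\mathbf{1}$ in additive coordinates corresponds to the gauge $(u,v)\mapsto(cu,\,v/c)$ in multiplicative coordinates, under which the Sinkhorn iteration is equivariant. Therefore distinct initial conditions sitting on different translates of the diagonal flow to distinct representatives of the same fixed-point line, obeying $\boldsymbol{\beta^\star}(\boldsymbol{\beta}^{[0]}+c\mathbf{1})=\boldsymbol{\beta^\star}(\boldsymbol{\beta}^{[0]})+c\mathbf{1}$, and each $\boldsymbol{\beta}^{[0]}$ selects exactly one fixed point.

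The principal obstacle is the upgrade from projective to pointwise convergence in the second step, because most textbook statements of Sinkhorn's theorem either fix a canonical normalization (e.g.\ $v^{[0]}=\mathbf{1}$) or phrase the conclusion purely in Hilbert's projective metric. Adapting the proof to the present non-canonical initialization $u^{[0]}=e^{\boldsymbol{\beta}^{[0]}}$ requires observing that pinning down $u^{[0]}$ breaks the gauge, and that once the Hilbert distance between $v^{[t]}$ and $v^\star$ shrinks geometrically the explicit formula $u_k^{[t+1]}=N_k/\sum_i P_{i,k}v_i^{[t]}$ transfers this contraction into pointwise convergence of $u^{[t]}$ to a definite $u^\star\in\mathbb{R}^K_{>0}$. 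Once this refinement is in place, the conjecture follows as a direct consequence of the strictly positive Sinkhorn--Knopp theorem, and the two-class case established in Section \ref{sec:two-classes} can be viewed as an explicit calculation of the contraction constant in the setting where every intermediate quantity is available in closed form.
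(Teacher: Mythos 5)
Your identification of the SUCPA iteration \eqref{eq:algoritmo} as a Sinkhorn--Knopp scaling of the strictly positive matrix $\mathbf{P}$ (target row sums $1$, target column sums $N_k$, written in the variables $u_k=e^{\beta_k}$, $v_i=1/\sum_k P_{i,k}u_k$) is correct, and it is a genuinely different and more general route than the paper's, which proves convergence only for $K=2$ by an ad-hoc geometric argument built on the auxiliary functions $\alpha_1,\alpha_2$ and the increment identity of Lemma \ref{lem:aux}. Via Birkhoff's contraction theorem your route would give, for every $K$, a unique projective fixed point (i.e.\ Conjecture \ref{conj:unicidad}, the line of fixed points) and geometric convergence of the orbit \emph{to that line}. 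The gap is at the step you yourself flag as the principal obstacle --- the upgrade from projective to pointwise convergence, which is exactly the content of Conjecture \ref{conj:convergencia} beyond convergence-to-the-line --- and your resolution of it fails as stated. Fixing $u^{[0]}=e^{\bsy{\beta}^{[0]}}$ does \emph{not} ``break the gauge'': by Lemma \ref{lem:singletransition} the iteration commutes with the action $(u,v)\mapsto(cu,v/c)$ at every step, however the orbit is initialized. More importantly, Hilbert's projective metric is blind to the gauge direction: $d_H(v^{[t]},v^\star)=0$ is perfectly compatible with $v^{[t]}=c_t v^\star$ for an arbitrary scalar sequence $c_t$, and then $u_k^{[t+1]}=N_k/\sum_i P_{i,k}v_i^{[t]}=u_k^\star/c_t$ wanders as well. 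So geometric decay of the Hilbert distance, by itself, cannot rule out the one failure mode that matters here: the orbit sliding off to infinity along the line of fixed points while approaching it projectively. This is precisely the scenario the paper must kill in the proof of Theorem \ref{main-result-2}, where it is excluded via the sign constraint on the increments coming from Eq. \eqref{eq:clave}.

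The gap is fixable, but it needs an argument you have not supplied. Decompose $\bsy{\beta}^{[t]}=a_t\mathbf{1}+\mathbf{w}^{[t]}$ with $\mathbf{w}^{[t]}$ in a fixed complement of $\mathrm{span}\{\mathbf{1}\}$. By the equivariance of Lemma \ref{lem:singletransition}, the dynamics splits as $\mathbf{w}^{[t+1]}=h(\mathbf{w}^{[t]})$ and $a_{t+1}=a_t+g(\mathbf{w}^{[t]})$, where $g$ and $h$ are smooth and $g(\mathbf{w}^\star)=0$, $h(\mathbf{w}^\star)=\mathbf{w}^\star$ at the projective fixed point $\mathbf{w}^\star$ (which is an honest fixed point of $\mathbf{f}$ by the Sinkhorn existence theorem together with Corollary \ref{recta-S}). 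Your Birkhoff contraction gives $\|\mathbf{w}^{[t]}-\mathbf{w}^\star\|\le C\kappa^t$ with $\kappa<1$ (Hilbert and Euclidean metrics are locally bi-Lipschitz equivalent on the slice), hence $|g(\mathbf{w}^{[t]})|\le LC\kappa^t$ is summable, hence $a_t$ converges and therefore $\bsy{\beta}^{[t]}\to a_\infty\mathbf{1}+\mathbf{w}^\star$. With this summable-drift argument (or an equivalent one) inserted, your proposal proves Conjecture \ref{conj:convergencia} for all $K\ge 2$, strictly more than the paper establishes; without it, the proof is incomplete at its decisive step.
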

This result is completely demonstrated in the case of two dimensions, i.e. $K\!=\!2$, corresponding to a binary classification problem for the original algorithm. The case $K\!>\!2$ of this conjecture is not yet proven, however many interesting properties can be shown, and this will be done in the next section.


\section{General properties of the SUCPA-map}\label{sec:properties}

Within the literature of discrete dynamical systems, most problems deal with the stability of isolated fixed points, for which the standard tools to study both, local and global stability focus on the linearization of the system at those points, and/or obtaining the so-called Lyapunov functions \cite{Devaney,Wiggins}. Furthermore, the number of fixed points is usually finite --therefore they form a bounded set. The SUCPA-map defined in \eqref{eq:transition} actually presents a fairly simple behavior, in terms of its dynamics --for example, there is no presence of chaos despite being non-linear. However, as will be demonstrated, the fixed points of the system form a non-bounded set of non-isolated points. Indeed, the set of fixed points is a straight line. 
These two features together with the algebraic type of equations, make it necessary to study the stability of the system by alternative ways. We did this, for the case $K\!=\!2$, by defining properly auxiliary functions that allowed us to argue geometrically.
\subsection{Fixed points analysis}

In terms of the SUCPA-map $\mathbf{f}$ of Eq. \eqref{eq:transition}, and according to Def. \ref{def:punto-fijo-w-limit}(1), $\boldsymbol{\beta}^\ast\!=\![\beta_1^\ast,\hdots,\beta_K^\ast]$ is a fixed point of $\mathbf{f}$, if and only if, $\forall\,k\!=\!1,\cdots,K$:
\begin{equation}\label{eq:fixedpoints}
e^{-\beta_k^{\ast}}=\frac{1}{N_k}\sum_{i=1}^N\frac{P_{i,k}}{\sum_{j=1}^KP_{i,j}e^{\beta_j^{\ast}}}
\end{equation}
Equations \eqref{eq:fixedpoints} cannot be solved explicitly, so we must approach this point from another side.
\smallskip

The SUCPA-map has a particular behavior against constant vector sums. It is easy to see the following result.
\begin{lemma}\label{lem:singletransition}
Let $\bsy{\lambda}\!=\![\lambda,\hdots,\lambda]\!\in\!\mathbb{R}^K$ a constant vector (all entries the same), then for all $\bsy{\beta}\!\in\!\mathbb{R}^K$, the SUCPA-map satisfies:
\begin{equation}\label{eq:mapeo-recta}
\mathbf{f}(\bsy{\beta}\!+\!\bsy{\lambda})=\mathbf{f}(\bsy{\beta})+\bsy{\lambda}
\end{equation}
\end{lemma}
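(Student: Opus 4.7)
The plan is to verify the identity by a direct computation, component by component, exploiting the fact that adding a constant to every coordinate of $\bsy{\beta}$ multiplies each inner exponential sum by a common factor $e^{\lambda}$, which then factors out of the logarithm.

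More concretely, I would fix an index $k\in\{1,\dots,K\}$ and substitute $\bsy{\beta}+\bsy{\lambda}$ into the defining formula \eqref{eq:transition}. The key observation is that, for every sample index $i$, the inner denominator transforms as
\begin{equation*}
\sum_{j=1}^K P_{i,j}\,e^{\beta_j+\lambda}= e^{\lambda}\sum_{j=1}^K P_{i,j}\,e^{\beta_j},
\end{equation*}
so the constant $e^{\lambda}$ is common to every term of the outer sum over $i$. Pulling it out of the sum over $i$ in \eqref{eq:transition} yields
\begin{equation*}
f_k(\bsy{\beta}+\bsy{\lambda})=-\log\!\left(e^{-\lambda}\cdot\frac{1}{N_k}\sum_{i=1}^N\frac{P_{i,k}}{\sum_{j=1}^K P_{i,j}\,e^{\beta_j}}\right).
\end{equation*}

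Finally, I would use the multiplicative-to-additive property of the logarithm, $-\log(e^{-\lambda}A)=\lambda-\log A$, to conclude $f_k(\bsy{\beta}+\bsy{\lambda})=\lambda+f_k(\bsy{\beta})$. Since this holds for every coordinate $k$, the vector identity $\mathbf{f}(\bsy{\beta}+\bsy{\lambda})=\mathbf{f}(\bsy{\beta})+\bsy{\lambda}$ follows immediately. There is no genuine obstacle here; the statement is essentially a one-line consequence of the structural fact that $e^{\lambda}$ is the same for every $j$, so the proof amounts to making this factorization visible and then applying $\log$.
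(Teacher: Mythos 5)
Your proof is correct and follows exactly the route the paper intends: the paper simply states that the identity is ``straightforward evaluating Eq.~\eqref{eq:transition} in $\bsy{\beta}+\bsy{\lambda}$'', and your computation (factoring $e^{\lambda}$ out of each inner denominator and then out of the logarithm) is precisely that verification, written out in full.
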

\begin{proof} It is straightforward evaluating Eq. \eqref{eq:transition} in $\bsy{\beta}\!+\!\bsy{\lambda}$.    
\end{proof}
Naturally, this property remains valid for successive iterations of $\mathbf{f}$ and it is the reason for the geometry structure of the fixed points set --previously subjected to the proof of its existence.

\begin{coro} \label{recta-S}
Let $\bsy{\lambda}\!=\![\lambda,\hdots,\lambda]\!\in\!\mathbb{R}^K$. If $\bsy{\beta^\ast}$ is a fixed point of $\mathbf{f}$, then the points in the straight line, ${\cal S}(\bsy{\beta^\ast})$, with $\bsy{\lambda}$ direction and through $\bsy{\beta^\ast}$:
\begin{equation}\label{eq:straight-line}
    {\cal S}(\bsy{\beta^\ast})=\big\{\bsy{\beta}=\bsy{\lambda}\!+\!\bsy{\beta^\ast}\big\}
\end{equation}
are also fixed points of $\mathbf{f}$.
\end{coro}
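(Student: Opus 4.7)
The plan is to deduce this corollary directly from Lemma \ref{lem:singletransition}, since the statement is essentially that the translation invariance under the constant direction $\bsy{\lambda}$ turns any single fixed point into a whole line of fixed points. No new structural analysis of $\mathbf{f}$ is required; the argument should be a one-line computation.

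First I would fix an arbitrary $\bsy{\lambda}=[\lambda,\hdots,\lambda]\in\mathbb{R}^K$ and an arbitrary element $\bsy{\beta}\in{\cal S}(\bsy{\beta^\ast})$, so that by definition $\bsy{\beta}=\bsy{\beta^\ast}+\bsy{\lambda}$. The goal then reduces to verifying $\mathbf{f}(\bsy{\beta})=\bsy{\beta}$. Applying Lemma \ref{lem:singletransition} to the pair $(\bsy{\beta^\ast},\bsy{\lambda})$ yields $\mathbf{f}(\bsy{\beta^\ast}+\bsy{\lambda})=\mathbf{f}(\bsy{\beta^\ast})+\bsy{\lambda}$, and since $\bsy{\beta^\ast}$ is a fixed point we have $\mathbf{f}(\bsy{\beta^\ast})=\bsy{\beta^\ast}$. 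Chaining these identities gives $\mathbf{f}(\bsy{\beta})=\bsy{\beta^\ast}+\bsy{\lambda}=\bsy{\beta}$, which is the required fixed-point condition.

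There is no real obstacle here; the only thing worth being careful about is that the parameter $\lambda$ is arbitrary in $\mathbb{R}$, so the conclusion holds for every point of the affine line ${\cal S}(\bsy{\beta^\ast})$ and not just near $\bsy{\beta^\ast}$. If desired, the proof could be closed with a short remark noting that, as a consequence, the fixed point set of $\mathbf{f}$ is a union of parallel straight lines all directed by the diagonal vector $[1,\hdots,1]$, which foreshadows the later claim that the set of fixed points is unbounded and non-isolated.
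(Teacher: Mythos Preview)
Your argument is correct and is in fact the most economical way to establish the corollary as stated: a single application of Lemma~\ref{lem:singletransition} together with $\mathbf{f}(\bsy{\beta^\ast})=\bsy{\beta^\ast}$ immediately gives $\mathbf{f}(\bsy{\beta^\ast}+\bsy{\lambda})=\bsy{\beta^\ast}+\bsy{\lambda}$.

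The paper takes a slightly different route. Instead of applying Lemma~\ref{lem:singletransition} once, it runs an induction on $t$ to obtain the iterated identity $\mathbf{f}^{(t)}(\bsy{\beta}+\bsy{\lambda})=\mathbf{f}^{(t)}(\bsy{\beta})+\bsy{\lambda}$ for all $t\in\mathbb{N}$, and the fixed-point claim is then read off as the special case $t=1$ with $\bsy{\beta}=\bsy{\beta^\ast}$. Your approach is cleaner for the statement as written; the paper's approach buys the stronger conclusion that entire orbits, not just fixed points, shift rigidly under $\bsy{\lambda}$, which is what underlies the subsequent remark that if $\bsy{\beta}^{[0]}$ converges to $\bsy{\beta^\ast}$ then $\bsy{\beta}^{[0]}+\bsy{\lambda}$ converges to $\bsy{\beta^\ast}+\bsy{\lambda}$. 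So the induction is not needed for the corollary itself, but it is the tool used for the commentary that follows.
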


\begin{proof} It is straightforward by induction in $t\!\in\!\mathbb{N}$. Lemma \ref{lem:singletransition} is the case for $t\!=\!1$. Assuming $\mathbf{f}^{(t)}(\boldsymbol{\beta}\!+\!\bsy{\lambda})=\mathbf{f}^{(t)}(\boldsymbol{\beta})+\bsy{\lambda}$ is valid for $t\!>\!1$, then for $t\!+\!1$, Lemma \ref{lem:singletransition} implies:
\begin{equation}
\mathbf{f}^{(t+1)}\big(\bsy{\beta}\!+\!\bsy{\lambda}\big)=\mathbf{f}\,\big(\mathbf{f}^{(t)}(\bsy{\beta}\!+\!\bsy{\lambda})\big)=\mathbf{f}\big(\mathbf{f}^{(t)}(\bsy{\beta})\!+\!\bsy{\lambda}\big)=\mathbf{f}^{(t+1)}(\bsy{\beta})+\bsy{\lambda}
\end{equation}
\end{proof}
This result shows that, if for some i.c. $\bsy{\beta}^{[0]}$ the algorithm converges to $\bsy{\beta^\ast}$, then for a shifted i.c. $\bsy{\beta}^{[0]}\!+\!\bsy{\lambda}$,  it converges to $\bsy{\beta^\ast}\!+\!\bsy{\lambda}$ (an example of this behaviour can be seen in Fig. \ref{fig:ej-1-2D}). Therefore, if  $\bsy{\beta^\ast}$ is a fixed point of $\mathbf{f}$, the points of the straight line with $\mathbf{1}\!=\![1,\hdots,1]$ direction and through $\bsy{\beta^\ast}$ are also fixed points of $\mathbf{f}$. 
In fact, there exists abundant numerical evidence to formulate the following conjecture, although only the case $K\!=\!2$ is formally proven in Sec. \ref{sec:two-classes}.
\begin{conj}\label{conj:unicidad} The map $\mathbf{f}$ has a unique straight line of fixed points which is of the form \eqref{eq:straight-line}.   
\end{conj}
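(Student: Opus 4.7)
The plan is to identify the fixed points of $\mathbf{f}$ with the critical points of the cross-entropy loss $\mathcal{L}$ introduced in the derivation of the SUCPA algorithm, and to show that $\mathcal{L}$ is strictly convex after factoring out the translation symmetry along $\mathbf{1}$. Explicitly, I would take
\begin{equation*}
    \mathcal{L}(\bsy{\beta}) = \frac{1}{N}\sum_{i=1}^N \log\!\Big(\sum_{j=1}^K P_{i,j}\, e^{\beta_j}\Big) - \sum_{k=1}^K \frac{N_k}{N}\beta_k,
\end{equation*}
verify that $\partial_k\mathcal{L}(\bsy{\beta})=0$ rearranges to exactly \eqref{eq:fixedpoints} after multiplying by $N/(N_k e^{\beta_k})$ and taking logarithms, and conclude that the fixed-point set of $\mathbf{f}$ coincides with the critical-point set of $\mathcal{L}$.

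The function $\mathcal{L}$ is convex, being a sum of log-sum-exp terms with positive weights plus a linear term, and a direct substitution $\bsy{\beta}\mapsto\bsy{\beta}+\lambda\mathbf{1}$ (the symmetry underlying Lemma \ref{lem:singletransition}) shows $\mathcal{L}$ is invariant under the $\mathbb{R}\mathbf{1}$ action, using $\sum_k N_k=N$. Thus $\mathcal{L}$ descends to a convex function on $\mathbb{R}^K/\mathbb{R}\mathbf{1}$. A direct Hessian calculation gives
\begin{equation*}
    \nabla^2\mathcal{L}(\bsy{\beta}) = \frac{1}{N}\sum_{i=1}^N\!\big(\mathrm{diag}(q_i)-q_iq_i^{\!\top}\big),\qquad q_{i,k} = \frac{P_{i,k}\,e^{\beta_k}}{\sum_j P_{i,j}\,e^{\beta_j}},
\end{equation*}
and the scalar $v^{\!\top}\!\big(\mathrm{diag}(q_i)-q_iq_i^{\!\top}\big)v$ is the variance of the entries of $v$ under the probability vector $q_i$. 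Since $P_{i,k}>0$ forces $q_{i,k}>0$ for every $i,k$, this variance vanishes only when $v$ is constant, so the kernel of $\nabla^2\mathcal{L}$ is exactly $\mathbb{R}\mathbf{1}$ and $\mathcal{L}$ is strictly convex on the quotient.

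For existence of a critical point I would check coercivity of $\mathcal{L}$ on the quotient: along a direction $v$ with $\sum_k v_k=0$ and $v\neq 0$, the leading asymptotics give $\mathcal{L}(tv)/t \to \max_k v_k - \sum_k (N_k/N)\,v_k$ as $t\to\infty$, and this limit is strictly positive because $N_k\geq 1$ for all $k$ makes $\sum_k (N_k/N)v_k$ a convex combination of the entries of $v$ with all weights positive, hence strictly less than $\max_k v_k$ for non-constant $v$. Coercivity plus strict convexity yields a unique critical point modulo $\mathbb{R}\mathbf{1}$; combined with Corollary \ref{recta-S}, the fixed-point set of $\mathbf{f}$ is then the single line of the form \eqref{eq:straight-line} through this minimizer.

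The main obstacle will be pinning down the Hessian kernel precisely: one must exploit the strict positivity of every $P_{i,k}$ to ensure that each distribution $q_i$ has full support, so that the variance form vanishes only on constant vectors. Without this hypothesis the kernel could a priori contain vectors outside $\mathbb{R}\mathbf{1}$, giving $\mathcal{L}$ extra flat directions modulo $\mathbf{1}$ and thus allowing multiple parallel lines of fixed points. Once strict convexity on the quotient is in hand, the remaining steps are routine convex analysis.
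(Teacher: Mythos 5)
Your argument is correct, and it takes a genuinely different --- and strictly more general --- route than the paper. The paper leaves this statement as a conjecture for $K>2$ and proves it only for $K=2$ (Theorem \ref{main-result-1}), by reducing to the intercept $x=\beta_2-\beta_1$, showing that the auxiliary function $\alpha_1$ of Eq. \eqref{alfas} decreases strictly from $N/N_1>1$ to $0$, and invoking the intermediate value theorem to obtain a unique $b$ with $\alpha_1(b)=1$. You instead identify the fixed points of $\mathbf{f}$ with the critical points of the cross-entropy objective from which SUCPA was derived: indeed $\partial_k\mathcal{L}=\frac1N\sum_i q_{i,k}-N_k/N$, and setting this to zero rearranges exactly to Eq. \eqref{eq:fixedpoints}. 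The rest is clean convex analysis: the Hessian $\frac1N\sum_i\big(\mathrm{diag}(q_i)-q_iq_i^{\top}\big)$ is a sum of covariance matrices of distributions with full support (because every $P_{i,k}>0$), so its kernel is exactly $\mathrm{span}\{\mathbf 1\}$, giving strict convexity on the quotient and at most one critical point modulo $\mathbf 1$; the recession computation $\mathcal{L}(tv)/t\to\max_k v_k-\sum_k (N_k/N)v_k>0$ for non-constant $v$ (valid since every $N_k\ge 1$) gives coercivity and hence existence. Combined with Corollary \ref{recta-S}, this settles Conjecture \ref{conj:unicidad} for all $K$, and the paper's monotonicity of $\alpha_1$ is precisely the one-dimensional shadow of your strict-convexity statement. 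Two points to make explicit in a write-up: justify the standard fact that positive recession in every nonzero quotient direction implies bounded sublevel sets (so the minimum is attained), and note that this resolves only the structure of the fixed-point set, not the convergence of orbits (Conjecture \ref{conj:convergencia}), for which a dynamical argument like the one in Section \ref{proof-main-results} is still required.
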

An example for $K\!=\!3$ can be seen in Fig. \ref{fig:ej-3-3D}.

\subsection{Jacobian Matrix of the SUCPA-map}

The Jacobian matrix of a system is an essential feature to study, for example, the local dynamics around the fixed points of a map \cite{Devaney,Wiggins}. 
\smallskip

The Jacobian matrix of the map $\mathbf{f}$ evaluated at $\bsy{\beta}$, $\mathbf{J}(\bsy{\beta})\!\in\!\mathbb{R}^{K\times K}$, has elements, $J_{k,\ell}(\bsy{\beta})$, which can be computed as:
\begin{equation}\label{matriz-J}
J_{k,\ell}(\bsy{\beta})=\frac{\partial f_k}{\partial \beta_\ell}(\bsy{\beta})=\frac{\ds \sum_{i=1}^N\frac{P_{i,k}P_{i,\ell}\,e^{\beta_\ell}}{\big(\sum_{j=1}^KP_{i,j}\,e^{\beta_j}\big)^2}} {\ds	\sum_{i=1}^N\frac{P_{i,k}}{\sum_{j=1}^KP_{i,j}\,e^{\beta_j}}}\, ,\quad k,\ell\!=\!1,\hdots,K
\end{equation}
In particular, $\mathbf{J}(\bsy{\beta})$ turns to be a {\it regular transition probability matrix} for all $\bsy{\beta}\!\in\!\mathbb{R}^K$, and so, it has the following properties \cite{meyer00}:
\begin{lemma}\label{transition-matrix}
The $\mathbf{J}(\bsy{\beta})$ matrix defined by the elements of \eqref{matriz-J} verifies:
\begin{itemize}
\item[(i)] $J_{k,\ell}(\bsy{\beta})\!>\!0$, $\forall\;k,\ell$.
\item[(ii)]  Each row of the matrix adds up to 1, i.e.: \begin{equation}\label{eq:suma-filas}
\sum_{\ell=1}^KJ_{k,\ell}(\bsy{\beta})\!=\!1,\  \forall\;1\leq k\leq K\end{equation}
This condition necessarily means that $\mathbf{J}(\bsy{\beta})$ has an eigenvalue $\mu\!=\!1$  with $\mathbf{1}\!=\![1,\hdots,1]$ as an associated eigenvector.
\item[(iii)] $\mu\!=\!1$ has multiplicity one, and all others eigenvalues verify $|\mu|\!<\!1$.
\end{itemize}
\end{lemma}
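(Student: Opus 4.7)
The plan is to prove the three items in order, with items (i) and (ii) being direct algebraic verifications from the explicit formula \eqref{matriz-J}, and item (iii) being a citation of a classical Perron--Frobenius-type theorem for positive stochastic matrices (which is exactly what the reference \cite{meyer00} provides).

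For item (i), I would simply observe that every quantity appearing in \eqref{matriz-J} is strictly positive: $P_{i,k},P_{i,\ell}>0$ by hypothesis on $\mathbf{P}$, $e^{\beta_j}>0$ trivially, and $N_k\geq 1$. Since both the numerator and the denominator of \eqref{matriz-J} are finite sums of strictly positive terms, $J_{k,\ell}(\bsy{\beta})>0$ for every $k,\ell$ and every $\bsy{\beta}\in\mathbb{R}^K$.

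For item (ii), the key computation is to sum the numerator of \eqref{matriz-J} over $\ell$ and swap the order of summation:
\begin{equation*}
\sum_{\ell=1}^K\sum_{i=1}^N\frac{P_{i,k}\,P_{i,\ell}\,e^{\beta_\ell}}{\bigl(\sum_{j=1}^KP_{i,j}e^{\beta_j}\bigr)^2}
=\sum_{i=1}^N\frac{P_{i,k}\bigl(\sum_{\ell=1}^KP_{i,\ell}e^{\beta_\ell}\bigr)}{\bigl(\sum_{j=1}^KP_{i,j}e^{\beta_j}\bigr)^2}
=\sum_{i=1}^N\frac{P_{i,k}}{\sum_{j=1}^KP_{i,j}e^{\beta_j}}.
\end{equation*}
This last expression is precisely the denominator that is common to all entries $J_{k,\ell}(\bsy{\beta})$ along the row $k$, so $\sum_{\ell}J_{k,\ell}(\bsy{\beta})=1$. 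The eigenvalue/eigenvector statement then follows because $\mathbf{J}(\bsy{\beta})\mathbf{1}=\mathbf{1}$.

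For item (iii), items (i) and (ii) together say that $\mathbf{J}(\bsy{\beta})$ is a stochastic matrix all of whose entries are strictly positive; in particular it is primitive (equivalently, regular), since any power stays entrywise positive. I would then invoke the standard Perron--Frobenius theorem for primitive stochastic matrices (as stated, for example, in \cite{meyer00}): the spectral radius equals $1$, the eigenvalue $\mu=1$ is simple, and every other eigenvalue satisfies $|\mu|<1$. The only step requiring care is to make sure the hypotheses needed for the strict version of the theorem are met: strict positivity of all entries (already obtained in (i)) is enough to conclude primitivity and therefore the strict inequality $|\mu|<1$ for the non-Perron eigenvalues, so no further work is required. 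The main (and only) conceptual obstacle is to recognize the matrix as a positive stochastic matrix; once this is established the spectral conclusion is a textbook result and no eigenvalue computation is needed.
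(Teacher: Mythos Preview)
Your proposal is correct and follows essentially the same approach as the paper: the paper's proof simply states that items (i) and (ii) are ``straightforward to find out'' and that item (iii) is ``a consequence of Perron's Theorem \cite[Chapter 8]{meyer00}.'' Your version is in fact more detailed than the paper's own proof, since you explicitly carry out the row-sum computation for (ii) and spell out why strict positivity yields primitivity for (iii), but the underlying argument is identical.
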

\begin{proof}
Items (i) and (ii) are straightforward to find out. Item (iii) is a consequence of Perron's Theorem \cite[Chapter 8]{meyer00}.  
\end{proof}

\subsection{Complementary definitions and properties}
To deepen into some details, we add other necessary concepts and results.
\begin{defn}
The {\it forward orbit of $\bsy{\beta}\!\in\!\mathbb{R}^K$} is the set: 
\begin{equation}
O^+(\bsy{\beta})=\big\{\mathbf{f}^{(t)}(\bsy{\beta}), t\!\geq\!0\big\}    
\end{equation}
\end{defn}
$O^+(\bsy{\beta})$ is the set of all points that the algorithm will pass through {when starting at $\bsy{\beta}$.} 

\begin{defn}
${\cal X}\!\subset\!\mathbb{R}^K$ is an {\it invariant set} if $\mathbf{f}({\cal X})\!\subset\!{\cal X}$. If $\mathbf{f}({\cal X})\!=\!{\cal X}$, ${\cal X}$ is called {\it strongly invariant} (or s-invariant for short).
\end{defn}
This means that all forward orbits of points in ${\cal X}$ lie in ${\cal X}$. In particular, any set of fixed points is s-invariant. Also, for all $\bsy{\beta}$, the set $\omega(\bsy{\beta})$ is a s-invariant set.

\begin{defn} A fixed point $\bsy{\beta^\ast}$ of a map is called {\it unstable}, if at least one eigenvalue of $\mathbf{J}(\bsy{\beta^\ast})$ has modulus greater than one. Otherwise, it is called {\it stable}. If all eigenvalues have modulus less than one, it is called {\it asymptotically stable}.   
\end{defn}

 

Let $\Delta_k(t)=\beta_k^{[t+1]}\!-\!\beta_k^{[t]}$ be the increment of the $k^{\mbox{th}}$ component of an orbit. 
The following interesting lemma is related to them.

\begin{lemma}\label{lem:aux} For
$\Delta_k(t)$ the following condition holds:
\begin{equation}
\sum_{k=1}^KN_ke^{-\Delta_k(t)}=N
\end{equation}
\end{lemma}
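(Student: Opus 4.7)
The plan is a direct substitution computation from the defining recursion \eqref{eq:algoritmo}, followed by a swap of summation order that exploits the stochastic structure of the rows of $\mathbf{P}$.

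First I would rewrite $e^{-\Delta_k(t)}$ using the recursion. Exponentiating the negative of \eqref{eq:algoritmo} gives
\begin{equation*}
e^{-\beta_k^{[t+1]}} \;=\; \frac{1}{N_k}\sum_{i=1}^N \frac{P_{i,k}}{\sum_{j=1}^K P_{i,j}\,e^{\beta_j^{[t]}}},
\end{equation*}
so multiplying by $N_k\,e^{\beta_k^{[t]}}$ yields
\begin{equation*}
N_k\,e^{-\Delta_k(t)} \;=\; N_k\,e^{\beta_k^{[t]}-\beta_k^{[t+1]}} \;=\; \sum_{i=1}^N \frac{P_{i,k}\,e^{\beta_k^{[t]}}}{\sum_{j=1}^K P_{i,j}\,e^{\beta_j^{[t]}}}.
\end{equation*}

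Next I would sum this identity over $k=1,\dots,K$ and interchange the order of the (finite) summations, pulling the $i$-sum to the outside. This produces
\begin{equation*}
\sum_{k=1}^K N_k\,e^{-\Delta_k(t)} \;=\; \sum_{i=1}^N \frac{\sum_{k=1}^K P_{i,k}\,e^{\beta_k^{[t]}}}{\sum_{j=1}^K P_{i,j}\,e^{\beta_j^{[t]}}},
\end{equation*}
where the numerator and denominator of each summand are the same expression (with dummy indices $k$ and $j$ respectively). Hence every summand equals $1$, and the total equals $N$. The constraint $\sum_k P_{i,k}=1$ from Definition \ref{def:mapa} is not even needed for this identity — it is purely an algebraic cancellation — which is why the assumption $\sum_k N_k=N$ is also not invoked here.

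Because the proof is entirely by substitution and reindexing, there is no real obstacle: the only thing to be careful about is keeping the $t$-indices on the right level (the whole identity holds pointwise in $t$, for any orbit of $\mathbf{f}$, and in particular for any initial condition $\bsy{\beta}^{[0]}$). A tidy way to present this in the paper would be to display the two equalities above and conclude in a single line.
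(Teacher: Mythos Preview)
Your proof is correct and follows essentially the same route as the paper: the paper writes $\Delta_k(t)$ directly as $-\log\big(\tfrac{1}{N_k}\sum_i P_{i,k}e^{\beta_k^{[t]}}/\sum_j P_{i,j}e^{\beta_j^{[t]}}\big)$, exponentiates to obtain the same identity $N_k e^{-\Delta_k(t)}=\sum_i P_{i,k}e^{\beta_k^{[t]}}/\sum_j P_{i,j}e^{\beta_j^{[t]}}$, and then sums over $k$. Your additional remark that neither $\sum_k P_{i,k}=1$ nor $\sum_k N_k=N$ is actually needed for the cancellation is a nice observation not made explicit in the paper.
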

\begin{proof}
By Eq. \eqref{eq:algoritmo}, it is not hard to see that each $\Delta_k(t)$ can be written as:
\begin{equation}\label{eq:Deltadef}
\Delta_k(t)=-\log\left(\frac{1}{N_k}\sum_{i=1}^N\frac{P_{i,k}e^{\beta_k^{[t]}}}{\sum_{j=1}^KP_{i,j}e^{\beta_j^{[t]}}}\right)    
\end{equation}
taking exponential functions on both sides of the equation:
\begin{equation}
N_ke^{-\Delta_k(t)}=\sum_{i=1}^N\frac{P_{i,k}e^{\beta_k^{[t]}}}{\sum_{j=1}^KP_{i,j}e^{\beta_j^{[t]}}}
\end{equation}
and finally adding all the terms over $k$, the proof is done.
\end{proof}
Lemma \ref{lem:aux} shows that, in some sense, the results at every step of the algorithm values are ``balanced'', since, if for some $k$, $\beta_k$ tends to increase with the iterations, then there must be, at least, another $k'$, for which $\beta_{k'}$ tend to decrease (and vice versa).


\section{The case of two classes}\label{sec:two-classes}

For $K\!=\!2$, $\bsy{\beta}\!=\![\beta_1,\beta_2]$, the SUCPA algorithm \eqref{eq:algoritmo}, and the map $\mathbf{f}\!=\![f_1,f_2]\!:\mathbb{R}^2\!\to\!\mathbb{R}^2$, become into:
\begin{equation}\label{eq:algoritmo-2D}
\left\{
\begin{array}{l}
f_1(\bsy{\beta}^{[t]})=\beta_1^{[t+1]}=\displaystyle-\log\left(\frac{1}{N_1}\sum_{i=1}^N\frac{P_{i,1}}{P_{i,1}\,e^{\beta_1^{[t]}}\!+\!P_{i,2}\,e^{\beta_2^{[t]}}}\right)\\[4mm]
f_2(\bsy{\beta}^{[t]})=\beta_2^{[t+1]}=\displaystyle-\log\left(\frac{1}{N_2}\sum_{i=1}^N\frac{P_{i,2}}{P_{i,1}\,e^{\beta_1^{[t]}}\!+\!P_{i,2}\,e^{\beta_2^{[t]}}}\right)
\end{array}\right.
\end{equation}
where $N_1\!+\!N_2\!=\!N$, $P_{i,k}\!>\!0$, $k\!=\!1,2$ and $P_{i,1}\!+\!P_{i,2}\!=\!1$, for all $i\!=\!1,\hdots,N$. 

Regarding the increments, Lemma \ref{lem:aux} remains:
\begin{equation}\label{eq:clave}
N_1\,e^{-\Delta_1(t)}\!+\!N_2\,e^{-\Delta_2(t)}=N
\end{equation}

In addition, for $K\!=\!2$ it is possible to derive an additional property. Once it is known that the algorithm must be convergent, it is necessary that $\Delta_1(t)$ and $\Delta_2(t)$ go to zero when $t\!\to\!\infty$. From Eq. \eqref{eq:clave}, and applying the L'Hopital's rule, it can be obtained the direction of convergence:
\begin{equation}
  \lim_{t\to+\infty} \frac{\Delta_2(t)}{\Delta_1(t)}= \lim_{\Delta_1\to 0} \frac{-\log\bigg(\dfrac{N\!-\!N_1e^{-\Delta_1}}{N_2}\bigg)}{\Delta_1}=\lim_{\Delta_1\to 0} \frac{-N_2}{(N\!-\!N_1e^{-\Delta_1})}\,\frac{N_1}{N_2}\,e^{-\Delta_1}=-\frac{N_1}{N_2}
\end{equation}


This means that $O^+(\bsy{\beta}^{[0]})\!\subset\!\mathbb{R}^2$ becomes tangent to a straight line of slope $-N_1/N_2$, and this is true for every i.c. $\bsy{\beta}^{[0]}$.
\smallskip

In order to obtain the fixed points of Eq. \ref{eq:algoritmo-2D}, for $x\!\in\!\mathbb{R}$, let ${\cal S}_x$ be the straight line with slope 1 and crossing the $y$-axis at $x$ value:
\begin{equation}\label{eq:recta-x}
   {\cal S}_x=\{\bsy{\beta}\!=\![\lambda,\lambda\!+\!x],\ \lambda\!\in\!\mathbb{R}\} 
\end{equation}
As the straight line of fixed point to be found is of this type, it will be enough to determine only its intercept $x$.

\subsection{Main results}
The results will be splitted up into those related to convergence and those related to the Jacobian matrix.

\subsubsection{About the convergence}
As said before, Conjectures \ref{conj:convergencia} and \ref{conj:unicidad} can be proven when the number of classes $K\!=\!2$. The main result can be stated as follow: {\it There exists a unique straight line of slope 1 built up by fixed points, to which the algorithm converges, for all initialization.} In terms of dynamical systems, this is equivalent to state the following theorems.
\begin{thm}\label{main-result-1} 
Let $\mathbf{f}$ be the map defined in \eqref{eq:algoritmo-2D}. Then, there exists a unique $b\!\in\!\mathbb{R}$ , such that the fixed point set of $\mathbf{f}$ is $\mathcal{S}_b$.
\end{thm}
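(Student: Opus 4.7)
My plan is to exploit the translation invariance from Corollary \ref{recta-S} to reduce the two-dimensional fixed-point problem to a single scalar equation, and then finish by the intermediate value theorem combined with a monotonicity argument.

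By Corollary \ref{recta-S}, the fixed-point set of $\mathbf{f}$ is automatically a union of lines of slope $1$, and each such line meets the horizontal axis $\{\beta_1=0\}$ in exactly one point. So it suffices to look for fixed points of the form $\bsy{\beta}^\ast=[0,b]$ and show that there is a unique such $b\in\mathbb{R}$. Setting $g(b):=\sum_{i=1}^N P_{i,1}/(P_{i,1}+P_{i,2}e^{b})$, substitution into \eqref{eq:algoritmo-2D} turns the fixed-point condition into the pair of scalar equations $g(b)=N_1$ and $\sum_{i=1}^N P_{i,2}/(P_{i,1}+P_{i,2}e^{b})=N_2\,e^{-b}$.

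The key step I would single out is the observation that these two equations are equivalent. Since $P_{i,1}+P_{i,2}=1$, the pointwise identity $P_{i,1}/(P_{i,1}+P_{i,2}e^{b})+P_{i,2}e^{b}/(P_{i,1}+P_{i,2}e^{b})=1$ summed over $i$ gives $g(b)+\sum_{i}P_{i,2}e^{b}/(P_{i,1}+P_{i,2}e^{b})=N$, so multiplying the second equation through by $e^{b}$ rewrites it as $N-g(b)=N_2$, i.e.\ $g(b)=N_1$---exactly the first. (Equivalently, this redundancy is a specialization of Lemma \ref{lem:aux} at a fixed point.) So the whole problem collapses to the scalar equation $g(b)=N_1$.

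The remainder is routine calculus. A direct calculation yields
$$g'(b)=-\sum_{i=1}^N\frac{P_{i,1}P_{i,2}\,e^{b}}{\big(P_{i,1}+P_{i,2}\,e^{b}\big)^{2}}<0$$
by strict positivity of the $P_{i,k}$, so $g$ is strictly decreasing, with $\lim_{b\to-\infty}g(b)=N$ and $\lim_{b\to+\infty}g(b)=0$. Since $0<N_1<N$, the intermediate value theorem produces a unique $b\in\mathbb{R}$ with $g(b)=N_1$. Combined with Corollary \ref{recta-S} this yields both claims at once: $\mathcal{S}_b$ consists of fixed points, and any fixed point translates along $\mathbf{1}$ to some $[0,b']$ with $g(b')=N_1$, forcing $b'=b$. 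There is no real obstacle in the argument; I would simply flag the two-equations-to-one reduction as the conceptual heart of the proof, everything else being standard.
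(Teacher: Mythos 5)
Your proof is correct and follows essentially the same route as the paper: your $g(b)$ is $N_1\alpha_1(b)$ in the paper's notation, the reduction of the second fixed-point equation to the first via $\sum_i\bigl(P_{i,1}+P_{i,2}e^{b}\bigr)/\bigl(P_{i,1}+P_{i,2}e^{b}\bigr)=N$ is exactly the paper's relation \eqref{relacion-alfas}, and the strict monotonicity plus the limits $N$ and $0$ at $\mp\infty$ combined with the intermediate value theorem is precisely the paper's existence-and-uniqueness argument. Your explicit remark that any fixed point translates along $\mathbf{1}$ to a point $[0,b']$ forcing $b'=b$ is a slightly more careful statement of why the fixed-point set is exactly $\mathcal{S}_b$, but the substance is identical.
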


Note that all fixed points are not-isolated, and ${\cal S}_b$ is an unbounded set.

\begin{thm}\label{main-result-2} 
Let $\mathbf{f}$ be the map defined in \eqref{eq:algoritmo-2D}. For all i.c. $\boldsymbol{\beta}^{[0]}$, there exist a fixed point, $\boldsymbol{\beta^\ast}\!\!\in{\cal S}_b$ 
($\bsy{\beta^\ast}(\bsy{\beta}^{[0]})$), such that $\omega(\boldsymbol{\beta}^{[0]})\!=\!\{\boldsymbol{\beta^\ast}\}$.  
\end{thm}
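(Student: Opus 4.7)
The plan is to exploit the translation invariance established in Corollary~\ref{recta-S} to reduce the planar dynamics to a one-dimensional recurrence on the direction transverse to $\mathcal{S}_b$, prove that this reduced map is a global contraction onto its unique fixed point, and finally recover convergence of the full orbit.

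\medskip

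First I would change coordinates to $\lambda^{[t]}:=\beta_1^{[t]}$ and $u^{[t]}:=\beta_2^{[t]}-\beta_1^{[t]}$, so that $\boldsymbol{\beta}^{[t]}=\lambda^{[t]}\mathbf{1}+(0,u^{[t]})$. Applying Corollary~\ref{recta-S} with $\boldsymbol{\lambda}=\lambda^{[t]}\mathbf{1}$ gives $\mathbf{f}(\boldsymbol{\beta}^{[t]})=\lambda^{[t]}\mathbf{1}+\mathbf{f}(0,u^{[t]})$, so the dynamics decouple into the scalar recurrences
\begin{equation*}
u^{[t+1]}=g(u^{[t]}), \qquad \lambda^{[t+1]}=\lambda^{[t]}+f_1(0,u^{[t]}),
\end{equation*}
with $g(u):=f_2(0,u)-f_1(0,u)$. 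By Theorem~\ref{main-result-1} the intercept $b$ is the unique fixed point of $g$.

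\medskip

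Second, I would prove $u^{[t]}\to b$ for every initial $u^{[0]}$ by showing that $g$ is a global contraction. Setting $w_i(u):=P_{i,2}e^u/(P_{i,1}+P_{i,2}e^u)\in(0,1)$, a direct computation produces
\begin{equation*}
g'(u)=\frac{\sum_{i=1}^N w_i(u)^2}{\sum_{i=1}^N w_i(u)} - \frac{\sum_{i=1}^N w_i(u)\bigl(1-w_i(u)\bigr)}{\sum_{i=1}^N \bigl(1-w_i(u)\bigr)},
\end{equation*}
which is the difference of the expectations of $w_i(u)$ under the two probability weights $p_i\propto w_i(u)$ and $q_i\propto 1-w_i(u)$. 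Since the likelihood ratio $p_i/q_i$ is monotone increasing in $w_i$, a rearrangement/FKG-type inequality yields $g'(u)\ge 0$; the trivial bound $w_i\in(0,1)$ gives $g'(u)<1$. Moreover $w_i(u)\to 1$ (resp.\ $0$) as $u\to+\infty$ (resp.\ $-\infty$), both weighted expectations collapse to the same limit, and so $g'(u)\to 0$ at infinity. Combined with continuity of $g'$ on $\mathbb{R}$, this implies $\rho:=\sup_{u\in\mathbb{R}}g'(u)<1$. Hence $|u^{[t+1]}-b|\le\rho\,|u^{[t]}-b|$ for all $t$, which gives geometric convergence $u^{[t]}\to b$.

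\medskip

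Third, I would promote this to convergence of the full orbit. Since $(0,b)\in\mathcal{S}_b$ is a fixed point, $f_1(0,b)=0$; smoothness of $f_1(0,\cdot)$ then gives $|f_1(0,u^{[t]})|\le C\,|u^{[t]}-b|\le C\rho^t|u^{[0]}-b|$, so the series $\sum_{t\ge 0}f_1(0,u^{[t]})$ converges absolutely and $\lambda^{[t]}\to\lambda^\ast$ for some $\lambda^\ast\in\mathbb{R}$. Setting $\boldsymbol{\beta^\ast}:=(\lambda^\ast,\lambda^\ast+b)\in\mathcal{S}_b$, one concludes $\boldsymbol{\beta}^{[t]}\to\boldsymbol{\beta^\ast}$, and therefore $\omega(\boldsymbol{\beta}^{[0]})=\{\boldsymbol{\beta^\ast}\}$.

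\medskip

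The main obstacle is the uniform contraction estimate $\sup_u g'(u)<1$: the pointwise inequality $g'(u)<1$ is elementary, but uniformity on the whole line requires both the correlation argument for $g'\ge 0$ and the asymptotic vanishing of $g'$ at $\pm\infty$. A more geometric alternative would instead compare the level sets $\{f_k(\boldsymbol{\beta})=\beta_k\}$ in the plane and use Lemma~\ref{lem:aux} together with the tangency slope $-N_1/N_2$ to trap each orbit between nested regions collapsing onto $\mathcal{S}_b$; both strategies hinge on the same transverse contraction phenomenon.
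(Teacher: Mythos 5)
Your proof is correct, but its second half takes a genuinely different route from the paper's. The transverse reduction $u^{[t+1]}=g(u^{[t]})$ is exactly the paper's reduction to intercepts ($g$ is the function $\phi$ of Eq.~\eqref{eq:updates}), and your rearrangement/FKG inequality $g'(u)\ge 0$ is literally the paper's key estimate \eqref{eq:desig-fundamental}: since $g'=(\alpha_1'\alpha_2-\alpha_2'\alpha_1)/(\alpha_1\alpha_2)$, the symmetrization producing the numerators $-(P_{i,1}P_{j,2}-P_{i,2}P_{j,1})^2$ is the same computation. Where you diverge is in how convergence to a \emph{single point} (rather than mere approach to $\mathcal{S}_b$) is obtained. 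The paper uses the one-sided bounds $b\le\phi(x)<x$ of Lemma~\ref{desigualdades-alfas} to get monotone convergence of the intercepts, and then rules out asymptotic sliding along $\mathcal{S}_b$ by contradiction: the identity of Lemma~\ref{lem:aux} forces $\Delta_1(t)$ and $\Delta_2(t)$ to have opposite signs, so the chord slopes cannot tend to $1$. You instead upgrade $0\le g'(u)<1$ to a uniform contraction $\rho=\sup_u g'(u)<1$ by checking $g'(u)\to 0$ as $u\to\pm\infty$ (correct: $w_i\to 1$ or $0$ makes both weighted means collapse to the same limit), obtain geometric convergence of $u^{[t]}$, and then sum the tangential increments $f_1([0,u^{[t]}])=O(\rho^t)$ to get $\lambda^{[t]}\to\lambda^\ast$ directly. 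This buys a fully quantitative statement with an explicit geometric rate and avoids the paper's somewhat delicate claim that an asymptotic approach to $\mathcal{S}_b$ would force chord slopes to tend to $1$; the price is the extra asymptotic analysis of $g'$ at infinity. Two small attribution slips, neither affecting correctness: the decoupling step uses the single-step translation covariance of Lemma~\ref{lem:singletransition} rather than Corollary~\ref{recta-S}, and the uniqueness of the fixed point of $g$ is best deduced from your own contraction estimate (or from Lemma~\ref{desigualdades-alfas}) rather than from Theorem~\ref{main-result-1}, which concerns fixed points of $\mathbf{f}$, not of $\phi$.
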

Both proofs are developed in Section \ref{proof-main-results}. 


\subsubsection{About the Jacobian matrix}
As said before, the eigenvalues and eigenvectors of the Jacobian matrix of a map, evaluated at a fixed point, account for the local dynamics of the system around that point. A fixed point $\bsy{\beta^\ast}$ of a map $\mathbf{f}$ is call {\it hyperbolic}, if the Jacobian matrix, $\mathbf{J}_{\mathbf{f}}(\bsy{\beta^\ast})$, has no eigenvalues, $\mu$ in the unit circle. Otherwise, it is called {\it non-hyperbolic}. The invariant spaces, $E^s,E^u$ and $E^c$, of $\mathbf{J}_{\mathbf{f}}(\bsy{\beta^\ast})$, associated to $|\mu|\!>\!1$, 
 $|\mu|\!<\!1$ and $|\mu|\!=\!1$, respectively, correspond to the {\it local} stable, the {\it local} unstable and the {\it local } 
  ``central'' directions, respectively \cite{Devaney,Wiggins}.

By Lemma \ref{transition-matrix}, the two eigenvalues of $\mathbf{J}_{\mathbf{f}}(\bsy{\beta^\ast})$ are, 1 with associated eigenvector $\mathbf{1}$, and the other, $\mu$, must verify $|\mu|\!<\!1$, so, the fixed points are all of non-hyperbolic type. The eigenvalue $1$ is in accordance with the obtaining of non-isolated fixed points \cite{Jamieson}, and the eigenvector $\mathbf{1}$ is in accordance with the direction of the  fixed points straight line.
Therefore, $E^u\!=\!\emptyset$, $E^c\!=\!\mbox{span}\{\mathbf{1}\}$ and $E^s\!=\!\mbox{span}\{\mathbf{v}\}$, where $\mathbf{v}$ is the eigenvector associated to $\mu$ to be determined. The following lemma shows some features for the eigen-pair of the Jacobian matrix for the fixed points.

\begin{lemma}\label{lem:eigenpair}
Let $\bsy{\beta^\ast}\!\!\in\!\mathbb{R}^2$ a fixed point of $\mathbf{f}$. The eigen-pairs of $\mathbf{J}_{\mathbf{f}}(\bsy{\beta^\ast})$ are $(1,\mathbf{1})$ and $(\mu,\mathbf{v})$ where:
\begin{itemize}
    \item[(i)] The value of $\mu$ and $\mathbf{v}$ are the same for all fixed point $\bsy{\beta^\ast}$
    \item[(ii)] $0\leq\mu<1$
    \item[(iii)] $\mathbf{v}=[N_2,-N_1]$
\end{itemize}
\end{lemma}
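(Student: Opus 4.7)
The plan is to simplify the Jacobian entries at a fixed point using the defining equation \eqref{eq:fixedpoints}, then to read off (i) and (iii) from the row-stochastic structure of $\mathbf{J}(\bsy{\beta^\ast})$, and finally to reduce (ii) to a Cauchy--Schwarz inequality.

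For (i), I would first observe that the Jacobian is translation-invariant along the direction $\mathbf{1}$. Under $\bsy{\beta}\mapsto\bsy{\beta}+\lambda\mathbf{1}$, every factor $e^{\beta_j}$ in \eqref{matriz-J} picks up the common scalar $e^{\lambda}$, so both the numerator and the denominator of $J_{k,\ell}(\bsy{\beta})$ rescale by $e^{-\lambda}$ and the quotient is unchanged. Hence $\mathbf{J}(\bsy{\beta}+\lambda\mathbf{1})=\mathbf{J}(\bsy{\beta})$, and since by Corollary \ref{recta-S} all fixed points lie on a single line $\mathcal{S}_b$, they share one and the same Jacobian, hence one and the same spectrum and eigenspaces.

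Next, I would use \eqref{eq:fixedpoints} with $K=2$, namely $\sum_{i=1}^N P_{i,k}/S_i = N_k e^{-\beta_k^\ast}$ with $S_i := P_{i,1}e^{\beta_1^\ast}+P_{i,2}e^{\beta_2^\ast}$, to compress \eqref{matriz-J} at $\bsy{\beta^\ast}$ into
\[
J_{k,\ell}(\bsy{\beta^\ast}) = \frac{e^{\beta_k^\ast+\beta_\ell^\ast}}{N_k}\sum_{i=1}^N \frac{P_{i,k}P_{i,\ell}}{S_i^2}.
\]
The symmetry $P_{i,1}P_{i,2}=P_{i,2}P_{i,1}$ then produces a reversibility-like identity $N_1 J_{1,2}(\bsy{\beta^\ast}) = N_2 J_{2,1}(\bsy{\beta^\ast}) =: C$, with $C>0$. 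Since $\mathbf{J}(\bsy{\beta^\ast})$ is $2\times 2$ row-stochastic, the row-sum identities $J_{k,1}+J_{k,2}=1$ give $\mu = J_{1,1}+J_{2,2}-1$, and a direct substitution shows that $\mathbf{v}_0 = [J_{1,2},-J_{2,1}]^\top$ is an eigenvector associated to $\mu$. Rewriting $\mathbf{v}_0$ via $J_{1,2}=C/N_1$ and $J_{2,1}=C/N_2$ yields $\mathbf{v}_0 = (C/(N_1N_2))[N_2,-N_1]^\top$, so $\mathbf{v}=[N_2,-N_1]^\top$ up to sign, proving (iii).

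The main obstacle is the sign claim in (ii). The upper bound $\mu<1$ is automatic from Lemma \ref{transition-matrix}, so the work lies in showing $\mu\geq 0$. Combining the trace formula with the reversibility identity gives $\mu = 1 - C/N_1 - C/N_2 = 1 - CN/(N_1 N_2)$, so the claim is equivalent to $CN\leq N_1 N_2$. Introducing $u_i := P_{i,1}e^{\beta_1^\ast}/S_i$ and $v_i := 1-u_i = P_{i,2}e^{\beta_2^\ast}/S_i$, the fixed-point equations translate into $\sum_i u_i = N_1$ and $\sum_i v_i = N_2$, while $C=\sum_i u_i v_i$. Substituting $v_i=1-u_i$, the desired inequality $N\sum_i u_i v_i \leq (\sum_i u_i)(\sum_i v_i)$ becomes exactly the Cauchy--Schwarz bound $N\sum_i u_i^2 \geq (\sum_i u_i)^2$, which closes (ii).
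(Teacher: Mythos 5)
Your proof is correct, and for part (ii) it takes a genuinely different route from the paper. The paper proves (i) exactly as you do in substance, by writing out $\mathbf{J}_{\mathbf{f}}([\lambda,\lambda+b])$ and noting it is independent of $\lambda$; one small correction, though: the claim that \emph{all} fixed points share one Jacobian needs the uniqueness of the line of fixed points, which is Theorem \ref{main-result-1}, not Corollary \ref{recta-S} (the corollary only says that the line through a given fixed point consists of fixed points, not that there is a single such line). For (ii) and (iii) the paper routes everything through the auxiliary functions $\alpha_k$: it rewrites $\mathbf{J}_b$ in terms of $\alpha_k'(b)$, reads $\mu=\alpha_1'(b)-e^b\alpha_2'(b)$ off the trace, deduces $\mu\ge0$ from the previously established inequality $\alpha_2'(x)\alpha_1(x)-\alpha_1'(x)\alpha_2(x)\le0$ of Eq.~\eqref{eq:desig-fundamental} (itself proved by a double-sum symmetrization yielding $-\sum_{i,j}(P_{i,1}P_{j,2}-P_{i,2}P_{j,1})^2(\cdots)$), and finds the eigenvector as $\mbox{Nul}(\mathbf{J}_b-\mu\mathbf{I})$ using the same anti-diagonal identity $N_1J_{1,2}=N_2J_{2,1}$ that you derive. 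Your argument instead stays entirely at the level of the Jacobian entries at the fixed point: the formula $\mu=1-CN/(N_1N_2)$ with $C=\sum_i u_iv_i$ reduces the sign claim to $N\sum_i u_iv_i\le(\sum_i u_i)(\sum_i v_i)$, which is Cauchy--Schwarz. The two sign arguments are ultimately the same inequality in disguise (your Cauchy--Schwarz step, expanded as $\sum_{i,j}(u_i-u_j)^2\ge0$, is the paper's symmetrized double sum after clearing denominators), but your packaging is self-contained --- it does not lean on the machinery of Lemma \ref{desigualdades-alfas} --- and the closed form $\mu=1-CN/(N_1N_2)$ is a cleaner quantitative statement than the paper's trace expression. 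Your verification that $[J_{1,2},-J_{2,1}]$ is an eigenvector of a $2\times2$ row-stochastic matrix for the second eigenvalue is correct and slightly more direct than the paper's null-space computation.
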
 

The proof can be seen in Section \ref{proof-main-results}. Fig. \ref{fig:local-dynamics} illustrate the phase portrait of the local dynamics near ${\cal S}_b$. The eigenvalues and eigenvectors of $\mathbf{J}(\bsy{\beta^\ast})$ account for the local dynamics, so every point $\bsy{\beta^\ast}\in\mathcal{S}_b$ is stable, which means that there exists a neighborhood of $\bsy{\beta^\ast}$, say $B(\bsy{\beta^\ast}\!,\varepsilon)$, for some $\varepsilon\!>\!0$, such that every orbit that enters it, stays there. Besides, as the $\varepsilon$ value is {\it the same} for all $\bsy{\beta^\ast}$ (because the Jacobian matrix is), it turns out that the set:
\begin{equation}
{\cal S}_b(\varepsilon)=\!\bigcup_{\bsy{\beta^\ast}\in\,{\cal S}_b} \!B(\bsy{\beta^\ast}\!,\varepsilon)    
\end{equation}
is an invariant set for the map $\mathbf{f}$. The set ${\cal S}_b(\varepsilon)$ is nothing but the strip of $2\varepsilon$-width around ${\cal S}_b$, and the dynamics within it will be characterize by the sign of $\mu$ (see Fig. \ref{fig:local-dynamics}).
\begin{figure}[ht]
  \centering\includegraphics[angle=-90,scale=0.3]{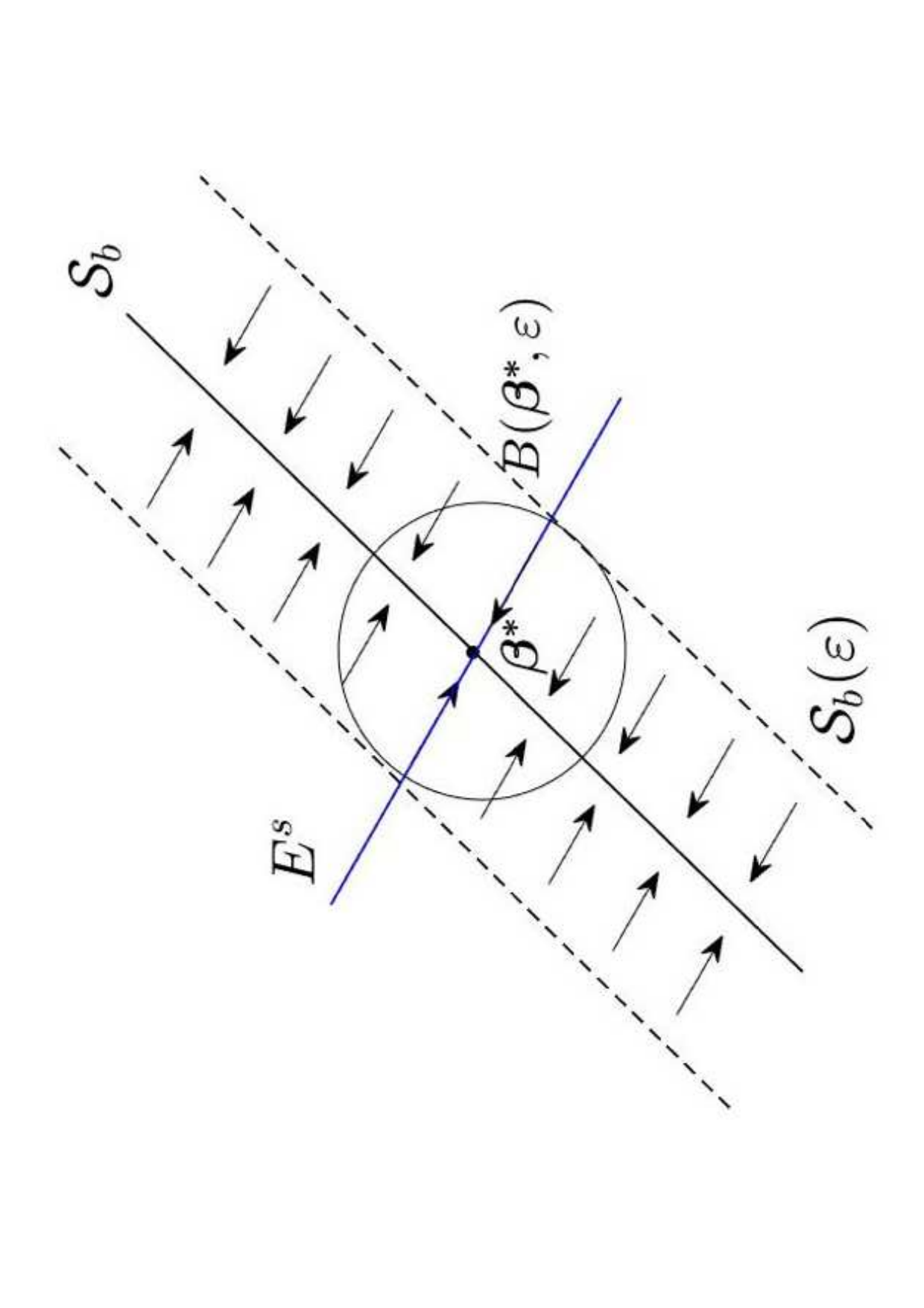}
  \caption{Schematic representation of local stability at a fixed point $\bsy{\beta^\ast}\!\!\in\!{\cal S}_b$ within a ball of radius $\varepsilon$ centered at $\bsy{\beta^\ast}$, $B(\bsy{\beta^\ast}\!,\varepsilon)$. The stability direction $E^s$ (blue line). The direction of $E^c$ matches that of the ${\cal S}_b$. The local dynamics is a replica for each $\bsy{\beta^\ast}\!\!\in{\cal S}_b$ within the $2\varepsilon$-width strip  ${\cal S}_b(\varepsilon)$}
  \label{fig:local-dynamics}
\end{figure}

In order to prove the main results, in the next section we introduce some auxiliary functions useful in the technical details of the proofs.

\subsubsection{Auxiliary functions and its properties}\label{sec:auxiliar}
 
 Let $\alpha_1,\alpha_2\!:\!\mathbb{R}\!\to\!\mathbb{R}$ be the following function: 
\begin{equation}\label{alfas}
 \alpha_1(x)=\frac{1}{N_1}\sum\limits_{i=1}^N\frac{P_{i,1}}{P_{i,1}\!+\!P_{i,2}\,e^x}\qquad \alpha_2(x)=\frac{1}{N_2}\sum\limits_{i=1}^N\frac{P_{i,2}}{P_{i,1}\!+\!P_{i,2}\,e^x}
\end{equation}
with $P_{i,k}\!>\!0$, $k\!=\!1,2$, $P_{i,1}\!+\!P_{i,2}\!=\!1$, $i\!=\!1,\hdots,N$. Functions $\alpha_k(x)$ are clearly positive and continuous derivable. Also is easy to see they satisfy the following relation, for $x\!\in\!\mathbb{R}$:
\begin{equation}\label{relacion-alfas}
N_1\alpha_1(x)+N_2\,e^x\alpha_2(x)=N
\end{equation}

\begin{lemma}\label{monotonía-alfas}
$\alpha_k(x)$ are monotone decreasing on $x$, $k\!=\!1,2$.
\end{lemma}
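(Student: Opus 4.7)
The proof is essentially a direct computation, so the plan is to differentiate each $\alpha_k$ term by term and observe that the sign of every summand is negative. No interesting obstacle is anticipated; the hypothesis $P_{i,k}>0$ does all the work.

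Concretely, I would first fix $i\in\{1,\dots,N\}$ and consider the scalar functions
\[
g_i(x)=\frac{P_{i,1}}{P_{i,1}+P_{i,2}\,e^{x}}, \qquad h_i(x)=\frac{P_{i,2}}{P_{i,1}+P_{i,2}\,e^{x}},
\]
so that $\alpha_1=\tfrac{1}{N_1}\sum_i g_i$ and $\alpha_2=\tfrac{1}{N_2}\sum_i h_i$. A single application of the quotient rule gives
\[
g_i'(x)=-\frac{P_{i,1}P_{i,2}\,e^{x}}{(P_{i,1}+P_{i,2}\,e^{x})^{2}}, \qquad h_i'(x)=-\frac{P_{i,2}^{\,2}\,e^{x}}{(P_{i,1}+P_{i,2}\,e^{x})^{2}},
\]
both of which are strictly negative on $\mathbb{R}$ because $P_{i,1},P_{i,2}>0$ and $e^{x}>0$.

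Summing over $i$ (with the positive normalizing constants $1/N_1$ and $1/N_2$) preserves the sign, so $\alpha_1'(x)<0$ and $\alpha_2'(x)<0$ for every $x\in\mathbb{R}$, which establishes strict monotone decrease of both functions. I would close the argument with a brief remark pointing out that, in contrast, the product $e^{x}\alpha_2(x)$ is actually increasing, as forced by the identity \eqref{relacion-alfas} combined with the monotone decrease of $\alpha_1$; this observation is not needed for the lemma itself but will be useful in subsequent arguments that rely on \eqref{relacion-alfas}.
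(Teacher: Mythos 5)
Your proof is correct and follows essentially the same route as the paper: differentiate each summand via the quotient rule, note that every term is strictly negative because $P_{i,1},P_{i,2}>0$, and sum. The closing remark about $e^{x}\alpha_2(x)$ being increasing is a harmless (and true) addition not present in the paper's proof.
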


\begin{proof} Indeed, for all $x\!\in\!\mathbb{R}$:
\begin{equation}\label{alfas-derivadas}
 \alpha'_1(x)\!=\!\frac{1}{N_1}\sum\limits_{i=1}^N\frac{-P_{i,1}P_{i,2}\,e^x}{\left(P_{i,1}\!+\!P_{i,2}\,e^x\right)^2}<0\qquad \alpha'_2(x)\!=\!\frac{1}{N_2}\sum\limits_{i=1}^N\frac{-P_{i,2}^{\,2}\,e^x}{\left(P_{i,1}\!+\!P_{i,2}\,e^x\right)^2}<0
\end{equation} 
\end{proof}

\subsection{Proofs of the main results}\label{proof-main-results}
Bellow are the proofs of Theo. \ref{main-result-1}, Lemma \ref{lem:eigenpair} and  Theo. \ref{main-result-2}. 

The first theorem is about the existence and uniqueness of a straight line, $\mathcal{S}_x$, of fixed points of $\mathbf{f}$, for some $x$. 

\begin{proof} {\it Theo. \ref{main-result-1}}
The strategy is to prove that there exists a unique value $b\!\in\!\mathbb{R}$, such that $\bsy{\beta^\ast}\!=\![0,b]$ is a fixed point of $\mathbf{f}$. Thus, by virtue of Corollary \ref{recta-S}, the straight line ${\cal S}_b$  is the desired line (see Eq. \eqref{eq:recta-x}). For $[0,b]$ to be a fixed point of $\mathbf{f}$, it has to verify, $\mathbf{f}\big([0,b]\big)\!=\![0,b]$ in Eq. \eqref{eq:algoritmo-2D}. 
\begin{equation} \label{eq:puntos-fijos}
\left\{
\begin{array}{l}
f_1\big([0,b]\big)=\displaystyle-\log\left(\frac{1}{N_1}\sum_{i=1}^N\frac{P_{i,1}}{P_{i,1}\,e^0\!+\!P_{i,2}\,e^b}\right)=0\\[4mm]
f_2\big([0,b]\big)=\displaystyle-\log\left(\frac{1}{N_2}\sum_{i=1}^N\frac{P_{i,2}}{P_{i,1}\,e^0\!+\!P_{i,2}\,e^b}\right)=b
\end{array}\right.
\end{equation}

Using the auxiliary functions $\alpha_k$ defined in Eq. \eqref{alfas}, this is equivalent to prove that there exists a unique $b\!\in\!\mathbb{R}$, such that $\alpha_1(b)\!=\!1$ and $\alpha_2(b)\!=\!e^{-b}$.

Indeed, taking limits for $\alpha_1(x)$ when $x\!\to\!-\infty$ and  $x\!\to\!+\infty$ respectively, it is obtained: 
\begin{align}
\lim_{x\to-\infty}\alpha_1(x)&=\lim_{x\to-\infty}\frac{1}{N_1}\sum\limits_{i=1}^N\frac{P_{i,1}}{P_{i,1}\!+\!P_{i,2}\,e^x}=\frac{N}{N_1}>1 \\
\lim_{x\to+\infty}\alpha_1(x)&=\lim_{x\to+\infty}\frac{1}{N_2}\sum\limits_{i=1}^N\frac{P_{i,1}}{P_{i,1}\!+\!P_{i,2}\,e^x}=0<1 \label{eq:alpha<1} 
\end{align}
The existence follows from the mean value theorem and the uniqueness from the $\alpha_1$ strict monotony. The proof of $\alpha_2(b)\!=\!e^{-b}$ comes straight out from \eqref{relacion-alfas}.    
\end{proof}

Note that $b$ is obtained implicitly, and only can be found by numerical methods.
\smallskip

The second theorem is about the global stability of the map $\mathbf{f}$. The strategy is to prove that $\mathcal{S}_b$ is a {\it global attractor} set for $\mathbf{f}$.
\subsubsection{About the intercepts}

For every point $\boldsymbol{\beta}\!=\![\beta_1,\beta_2]\in\mathbb{R}^2$ there is a single straight line, $\mathcal{S}_x$, of slope 1 and intercept, $x\!=\!\beta_2\!-\!\beta_1$, that passes through it. This section is devoted to show that every ${\cal S}_x$ monotonously approaches to $\mathcal{S}_b$, where $b$ is the intercept of the fixed points line, previously obtained. Subtracting the equations in \eqref{eq:algoritmo-2D} and using the auxiliary functions $\alpha_k$ of \ref{alfas}, it is obtained:
\begin{equation}
    \beta_2^{[t+1]}\!-\!\beta_1^{[t+1]}=\log\left(\frac{\alpha_1\big(\beta_2^{[t]}\!-\!\beta_1^{[t]}\big)}{\alpha_2\big(\beta_2^{[t]}\!-\!\beta_1^{[t]}\big)}\right)
\end{equation}
Thus, for each $x\!\in\!\mathbb{R}$, the function:
\begin{equation}\label{eq:updates}
\phi(x)=\log\left(\frac{\alpha_1(x)}{\alpha_2(x)}\right)    \end{equation} 
defines the update of the $\mathcal{S}_x$ intercepts, according to the iterations of $\mathbf{f}$. In particular, for the $\mathcal{S}_b$ line of fixed points, this relationship implies:
\begin{equation}\label{eq:balphas}
    b=\phi(b)=\log\left(\frac{\alpha_1(b)}{\alpha_2(b)}\right)
\end{equation}
The following lemma shows that the intercepts $x$ monotonously approach to $b$.

\begin{lemma}\label{desigualdades-alfas}\
\begin{itemize}
\item[(i)] For $x\!>\!b$:\quad
$b\leq\phi(x)<x$
\item[(ii)] For $x\!<\!b$:\quad 
$x<\phi(x)\leq b$
\end{itemize}
\end{lemma}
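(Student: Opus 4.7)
I combine two independent monotonicity properties of the scalar map $\phi$ to cover all four inequalities at once: for the strict inequalities I show $\phi(x)-x$ is strictly decreasing in $x$, and for the weak ones I show $\phi$ itself is (weakly) increasing. Both properties, combined with $\phi(b)=b$, deliver the lemma immediately.

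\emph{Strict inequalities.} Using the constraint \eqref{relacion-alfas} to eliminate $\alpha_2$, i.e.\ $N_2 e^x\alpha_2(x)=N-N_1\alpha_1(x)$, I rewrite
\[
\phi(x)-x \;=\; \log\alpha_1(x) - \log\bigl(N-N_1\alpha_1(x)\bigr) + \log N_2.
\]
By Lemma \ref{monotonía-alfas}, $\alpha_1$ is strictly decreasing, so $\log\alpha_1(x)$ is strictly decreasing and $N-N_1\alpha_1(x)$ is strictly increasing. Hence $\phi(x)-x$ is a strictly decreasing function of $x$, and since $\phi(b)=b$ by \eqref{eq:balphas} it vanishes only at $x=b$. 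This yields $\phi(x)<x$ for $x>b$ and $\phi(x)>x$ for $x<b$.

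\emph{Weak inequalities.} Differentiating the display above gives
\[
\phi'(x) \;=\; 1+\frac{N\,\alpha_1'(x)}{\alpha_1(x)\bigl(N-N_1\alpha_1(x)\bigr)}.
\]
Introducing $p_i=P_{i,1}/(P_{i,1}+P_{i,2}e^x)$ and $q_i=1-p_i$, a short computation shows that $N_1\alpha_1(x)=\sum_i p_i$, $N-N_1\alpha_1(x)=\sum_i q_i$, and $N_1\alpha_1'(x)=-\sum_i p_iq_i$, whence
\[
\phi'(x) \;=\; 1-\frac{N\sum_i p_iq_i}{\bigl(\sum_i p_i\bigr)\bigl(\sum_i q_i\bigr)}.
\]
Substituting $q_i=1-p_i$ into the numerator reduces the sign of $\phi'(x)$ to that of $N\sum_i p_i^2-\bigl(\sum_i p_i\bigr)^2$, which is non-negative by the Cauchy--Schwarz inequality. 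Hence $\phi$ is monotonically increasing, and $\phi(b)=b$ then gives $\phi(x)\geq b$ for $x\geq b$ and $\phi(x)\leq b$ for $x\leq b$.

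\emph{Main obstacle.} The first step is essentially bookkeeping once one observes that the constraint \eqref{relacion-alfas} makes the $+x$ term inside $\phi$ appear explicitly and thus allows a sign argument based only on $\alpha_1$. The substantive step is the monotonicity of $\phi$ itself: since both $\alpha_1$ and $\alpha_2$ are strictly decreasing, the monotonicity of their ratio is not immediate. The telescoping identity $(\sum p_i)(\sum q_i)-N\sum p_iq_i=N\sum p_i^2-(\sum p_i)^2$, which unlocks Cauchy--Schwarz, relies crucially on the normalization $p_i+q_i=1$ coming from $P_{i,1}+P_{i,2}=1$, and is what makes the argument go through.
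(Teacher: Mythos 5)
Your argument is correct: the identity $\phi(x)-x=\log\alpha_1(x)-\log\bigl(N-N_1\alpha_1(x)\bigr)+\log N_2$ does follow from \eqref{relacion-alfas} (note $N-N_1\alpha_1(x)=N_2e^x\alpha_2(x)>0$, so the logarithm is defined), the formula for $\phi'$ is right, and the reduction of its sign to $N\sum_i p_i^2-\bigl(\sum_i p_i\bigr)^2\ge 0$ is valid. At the level of strategy you coincide with the paper: there too the weak inequality comes from showing that $\phi$ is monotone non-decreasing (phrased as $g=e^{-\phi}=\alpha_2/\alpha_1$ being non-increasing) together with $\phi(b)=b$ from \eqref{eq:balphas}, and the strict inequality from the strict monotonicity of $\alpha_1$ combined with \eqref{relacion-alfas}. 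Where you genuinely diverge is in the execution of the monotonicity step, which is the substantive part. The paper works with the quotient $\alpha_2/\alpha_1$ directly and establishes $\alpha_2'(x)\alpha_1(x)-\alpha_1'(x)\alpha_2(x)\le 0$ by expanding a double sum and pairing the $(i,j)$ and $(j,i)$ terms so that each combined numerator becomes $-\bigl(P_{i,1}P_{j,2}-P_{i,2}P_{j,1}\bigr)^2$. You instead use \eqref{relacion-alfas} to eliminate $\alpha_2$ altogether, so the same inequality collapses to the one-line Cauchy--Schwarz bound $\bigl(\sum_i p_i\bigr)^2\le N\sum_i p_i^2$; this is the same underlying Lagrange identity, but your normalization $p_i+q_i=1$ makes it visible immediately and avoids the symmetrization bookkeeping. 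Two further small gains of your version: it treats $x>b$ and $x<b$ simultaneously (the paper proves case (i) and declares case (ii) analogous), and your closed form for $\phi'$ reproves, as a by-product, the non-negativity of the second eigenvalue $\mu=\phi'(b)$ used later in Lemma \ref{lem:eigenpair}.
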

\begin{proof}
The right inequality in Lemma \ref{desigualdades-alfas} (i) is fairly straightforward to prove. By Lemma \ref{monotonía-alfas} and \eqref{eq:alpha<1}, we have $\alpha_1(x)\!<\!1\!=\!\alpha_1(b)$, and then, \eqref{relacion-alfas} implies $e^x\alpha_2(x)\!>\!1$. Thus, $\alpha_1(x)\!<\!e^x\alpha_2(x)$, which is equivalent to the statement ($\alpha_2(x)>0$). 
\smallskip

To obtain the left inequality, we must ``split hair''. It is equivalent to prove:
\begin{equation}\label{desigualdad-ii}
\frac{\alpha_2(x)}{\alpha_1(x)}\le e^{-b}
\end{equation}
Let $g$ be a function on $x$ defined as:
\begin{equation}
   g(x)={e^{\phi(x)}}=\frac{\alpha_2(x)}{\alpha_1(x)} 
\end{equation}
Note that $g(b)\!=\!e^{-b}$, via \eqref{eq:balphas}. Thus, to obtain \eqref{desigualdad-ii}, all that remains is to prove that $g$ is a monotone non-increasing function, i.e., $g'(x)\!\le\!0$, for all $x\!>\!b$.
This is equivalent to show that:
\begin{equation}\label{eq:desig-fundamental}
 N_1N_2\big(\alpha_2'(x)\alpha_1(x)-\alpha_1'(x)\alpha_2(x)\big)\le 0   
\end{equation}
By using the expressions of $\alpha_k$ and $\alpha_k'$ in \ref{alfas} and \ref{alfas-derivadas}:
\begin{align}\label{términos-simétricos}
&N_1N_2\,\alpha_2'(x)\alpha_1(x)-N_1N_2\,\alpha_1'(x)\alpha_2(x)=\\
&=\left[\sum_{i=1}^N\frac{-P_{i,2}^{\,2}\,e^x}{\big(P_{i,1}\!+\!P_{i,2}\,e^x\big)^2}\right]\!\left[\sum_{j=1}^N\frac{P_{j,1}}{P_{j,1}\!+\!P_{j,2}\,e^x}\right]-\!\left[\sum_{i=1}^N\frac{-P_{i,1}P_{i,2}\,e^x}{\big(P_{i,1}\!+\!P_{i,2}\,e^x\big)^2}\right]\!\left[\sum_{j=1}^N\frac{P_{j,2}}{P_{j,1}\!+\!P_{j,2}\,e^x}\right]\\
&=\sum_{j=1}^N\sum_{i=1}^N\frac{- P_{i,2}^{\,2}\,e^xP_{j,1}+P_{i,1}P_{i,2}\,e^xP_{j,2}}{\big(P_{i,1}\!+\!P_{i,2}\,e^x\big)^2\big(P_{j,1}\!+\!P_{j,2}\,e^x\big)}=e^x\sum_{j=1}^N\sum_{i=1}^N\frac{P_{i,2}\,\big(P_{i,1}P_{j,2}\!-\!P_{i,2}P_{j,1}\big)}{\big(P_{i,1}\!+\!P_{i,2}\,e^x\big)^2\big(P_{j,1}\!+\!P_{j,2}\,e^x\big)}\\
&=e^x\sum_{j=1}^N\sum_{i=1}^N\frac{ P_{i,2}\,\big(P_{i,1}P_{j,2}\!-\!P_{i,2}P_{j,1}\big)\big(P_{j,1}\!+\!P_{j,2}\,e^x\big)}{\big(P_{i,1}\!+\!P_{i,2}\,e^x\big)^2\big(P_{j,1}\!+\!P_{j,2}\,e^x\big)^2}\label{eq:numeradores}\\
&=e^x\sum_{j=1}^N\sum_{i=1}^N\frac{-\big(P_{i,1}P_{j,2}\!-\!P_{i,2}P_{j,1}\big)^2}{\big(P_{i,1}\!+\!P_{i,2}\,e^x\big)^2\big(P_{j,1}\!+\!P_{j,2}\,e^x\big)^2}\leq 0
\end{align}
The last equality comes from the fact that in Eq. \eqref{eq:numeradores}, the terms with same subscripts ($i\!=\!j$), equal zero; and for $i\!\neq\!j$, by lumping together and adding two symmetric terms on $i$ and $j$, the numerators become:
\begin{align}
 &P_{i,2}\,\big(P_{i,1}P_{j,2}\!-\!P_{i,2}P_{j,1}\big)\big(P_{j,1}\!+\!P_{j,2}\,e^x\big)+ P_{j,2}\,\big(P_{j,1}P_{i,2}\!-\!P_{j,2}P_{i,1}\big)\big(P_{i,1}\!+\!P_{i,2}\,e^x\big)=\\
 & = \big(P_{i,1}P_{j,2}\!-\!P_{i,2}P_{j,1}\big)\big[P_{i,2}\,\big(P_{j,1}\!+\!P_{j,2}\,e^x\big)\!-\!P_{j,2}\,\big(P_{i,1}\!+\!P_{i,2}\,e^x\big)\big]\\[0.5mm]
 & =\big(P_{i,1}P_{j,2}\!-\!P_{i,2}P_{j,1}\big)\big[P_{i,2}P_{j,1}+P_{i,2}P_{j,2}\,e^x\!-\!P_{j,2}P_{i,1}\!-\!P_{j,2}P_{i,2}\,e^x\big]\\[0.5mm]
 &= -\big(P_{i,1}P_{j,2}\!-\!P_{i,2}P_{j,1}\big)^2
\end{align}

Then, \eqref{eq:desig-fundamental} holds, $g^\prime(x)\leq0$ and $g(x)\leq e^{-b}$. 
In an analogous way, it can be proven the same inequalities corresponding to Lemma \ref{desigualdades-alfas} (ii), for $x\!<\!b$.
\end{proof}
\begin{figure}[ht]
  \centering\includegraphics[scale=0.4]{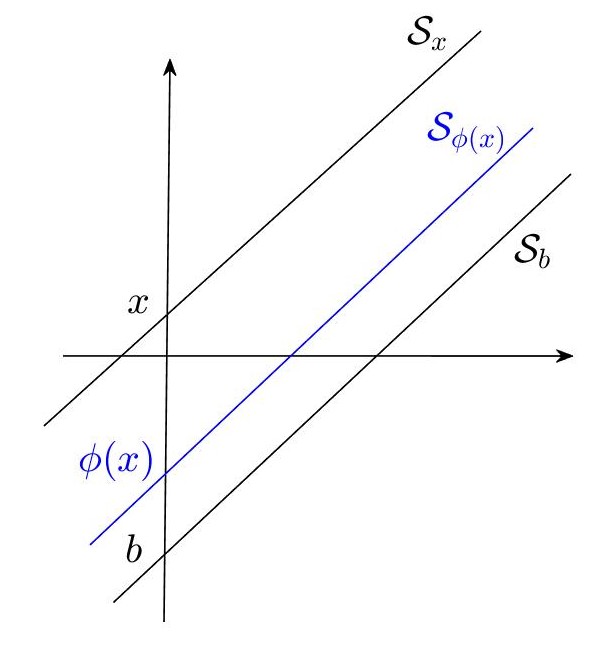}
  \caption{Any unitary slope straight line ${\cal S}_x$ above the line of fixed points, ${\cal S}_b$, is mapped onto another one, $\mathbf{f}({\cal S}_x)={\cal S}_{\phi(x)}$ (in blue), placed between them, but strictly above ${\cal S}_x$}
  \label{fig:rectas}
\end{figure}

The geometrical interpretation of this lemma is the following. Let ${\cal S}_b^{\uparrow}$ and ${\cal S}_b^{\downarrow}$ be the semi-planes located above and below ${\cal S}_b$, respectively, then they are invariant sets for $\mathbf{f}$.
\begin{coro}\label{co:semi-planes} If $\bsy{\beta}\!\in\!{\cal S}_b^{\uparrow}$, then $O^+(\bsy{\beta})\!\in\!{\cal S}_b^{\uparrow}$, and if $\bsy{\beta}\!\in\!{\cal S}_b^{\downarrow}$, then $O^+(\bsy{\beta})\!\in\!{\cal S}_b^{\downarrow}$.    
\end{coro}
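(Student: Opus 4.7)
The plan is to reduce the two-dimensional invariance statement to the one-dimensional dynamics of the intercept $x = \beta_2-\beta_1$ of the unique unit-slope line $\mathcal{S}_x$ through $\bsy{\beta}$, and then to iterate the one-sided inequalities already supplied by Lemma \ref{desigualdades-alfas}.

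First, I would observe that $\bsy{\beta}\in{\cal S}_b^{\uparrow}$ is equivalent to $\beta_2-\beta_1 \geq b$, i.e., $\bsy{\beta}\in\mathcal{S}_x$ for some $x\geq b$. By Lemma \ref{lem:singletransition}, $\mathbf{f}$ commutes with translations by $\bsy{\lambda}=[\lambda,\lambda]$, so every point of $\mathcal{S}_x$ is sent to a point with the same new intercept; equivalently $\mathbf{f}(\mathcal{S}_x)=\mathcal{S}_{\phi(x)}$, where $\phi$ is the scalar update defined in \eqref{eq:updates}. In particular, the intercept of $\mathbf{f}(\bsy{\beta})$ equals $\phi(x)$.

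Next, I would apply Lemma \ref{desigualdades-alfas}(i): for $x>b$ we have $\phi(x)\geq b$, while $\phi(b)=b$ is automatic because every point of $\mathcal{S}_b$ is fixed (by Theorem \ref{main-result-1} together with Corollary \ref{recta-S}). Hence $\mathbf{f}(\bsy{\beta})$ again lies in $\mathcal{S}_b^{\uparrow}$, and a straightforward induction on $t$ gives $\mathbf{f}^{(t)}(\bsy{\beta})\in\mathcal{S}_b^{\uparrow}$ for every $t\geq 0$, which is exactly $O^+(\bsy{\beta})\subset\mathcal{S}_b^{\uparrow}$. The case $\bsy{\beta}\in\mathcal{S}_b^{\downarrow}$ is completely symmetric, invoking Lemma \ref{desigualdades-alfas}(ii) to obtain $x\leq \phi(x)\leq b$ and iterating.

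I do not expect any genuine obstacle: all of the analytic content sits inside Lemma \ref{desigualdades-alfas}, and the corollary is just its geometric repackaging, namely that the scalar dynamics $x\mapsto \phi(x)$ leaves each of the closed rays $[b,+\infty)$ and $(-\infty,b]$ invariant, which in the $(\beta_1,\beta_2)$-plane is precisely the invariance of the two half-planes separated by $\mathcal{S}_b$.
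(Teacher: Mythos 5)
Your proposal is correct and follows essentially the same route as the paper: the paper offers no separate proof, presenting the corollary as the ``geometrical interpretation'' of Lemma \ref{desigualdades-alfas} via the identity $\mathbf{f}({\cal S}_x)={\cal S}_{\phi(x)}$, which is exactly the reduction to the intercept dynamics that you spell out and iterate. Your explicit use of the closed rays $[b,+\infty)$ and $(-\infty,b]$ is the right reading of the semi-planes, consistent with how the corollary is later invoked to rule out crossing of ${\cal S}_b$.
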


Moreover, if $x\!>\!b$ (i.e. ${\cal S}_x$ is placed above ${\cal S}_b$), then, Lemma \ref{desigualdades-alfas} states that ${\cal S}_{\phi(x)}$ will be placed between ${\cal S}_x$ and ${\cal S}_b$, but {\it strictly} under the first one. This feature can be seen in the Fig. \ref{fig:rectas}. 
\begin{proof} {\it Lemma \ref{lem:eigenpair}} 
Theorem \ref{main-result-1} states that the fixed point of $\mathbf{f}$ are those $\bsy{\beta^\ast}\!=\![\lambda,\lambda\!+\!b]$, $\lambda\!\in\!\mathbb{R}$, with $b$ the only value that verifies $\alpha_1(b)\!=\!1$ and $\alpha_2(b)\!=\!e^{-b}$. By Eq.\eqref{matriz-J} the Jacobian matrix of $\mathbf{f}$ at these points results:
\begin{equation}\label{eq:matriz-J-2D} \mathbf{J}_b=\mathbf{J}_{\mathbf{f}}\big([\lambda,\lambda\!+\!b]\big)= 
 \begin{bmatrix}
 \ds \frac{1}{N_1}\sum_{i=1}^N\frac{P_{i,1}^{\,2}}{(P_{i,1}\!+\!P_{i,2}\,e^b)^2} & \ds \frac{1}{N_1}\sum_{i=1}^N\frac{P_{i,1}P_{i,2}\,e^b}{(P_{i,1}\!+\!P_{i,2}\,e^b)^2} \\[4mm]
 \ds \frac{1}{N_2}\sum_{i=1}^N\frac{P_{i,1}P_{i,2}\,e^b}{(P_{i,1}\!+\!P_{i,2}\,e^b)^2} & \ds \frac{1}{N_2}\sum_{i=1}^N\frac{P_{i,2}^{\,2}(e^b)^2}{(P_{i,1}\!+\!P_{i,2}\,e^b)^2}   
 \end{bmatrix}
\end{equation}
Note that $\mathbf{J}_b$ does not depend on $\lambda$, i.e., it is the same for all $\bsy{\beta^\ast}\!\in\!{\cal S}_b$, and so, the same the eigenvalues and eigenvectors. As a consequence, the local dynamics at one fixed point results in an exact replica at every $\bsy{\beta^\ast}$. This fact proves item $(i)$.

In order to obtain both, the other eigenvalue $\mu$ and its eigenvector $\mathbf{v}$, the matrix entries in Eq. \eqref{eq:matriz-J-2D} are written  in simplified form by means of the auxiliary functions $\alpha_k, k\!=\!1,2$. Using Eqs. \eqref{eq:suma-filas} and \eqref{alfas-derivadas}, the matrix in \eqref{eq:matriz-J-2D}, becomes:
\begin{equation}\label{eq:matriz-J-2D-alfas} 
\mathbf{J}_b=\mathbf{J}_{\mathbf{f}}\big([\lambda,\lambda\!+\!b]\big)= 
 \begin{bmatrix}
 1\!+\!\alpha_1'(b) & -\alpha'_1(b) \\[2mm]
1\!+\!e^b\alpha_2'(b) & -e^b\alpha'_2(b)   
 \end{bmatrix}
\end{equation}

The trace of $\mathbf{J}_b$ equals the sum of its eigenvalues, so:
\begin{equation}\label{eq:mu}
\mu=\alpha_1'(b)-e^b\alpha_2'(b)    
\end{equation}
Let $h(x)$ be another auxiliary function on $x$ defined as $h(x)\!=\!\alpha_2'(x)\alpha_1(x)-\alpha_1'(x)\alpha_2(x)$. By virtue of \eqref{eq:desig-fundamental}, we have that $h(x)\!\le\!0$ for all $x\!\in\!\mathbb{R}$. In particular, for $x\!=\!b$, and using that $\alpha_1(b)\!=\!1$ and $\alpha_2(b)\!=\!e^{-b}$, it is possible to obtain the sign of $\mu$:
\begin{align}
h(b)=\alpha'_2(b)\alpha_1(b)-\alpha'_1(b)\alpha_2(b)&\le 0  \\
\alpha'_2(b)-\alpha'_1(b)\,e^{-b} &\le 0\\
\alpha'_2(b)\,e^b-\alpha'_1(b) &\le 0\\
-\mu &\le 0
\end{align}
Therefore, item (ii) is proven. As a consequence of $\mu$ being non-negative, any trajectory that approaches ${\cal S}_b$ from one side does not cross to the other side.

For item (iii), on the one hand, the anti-diagonal entries of $\mathbf{J}_b$ in \eqref{eq:matriz-J-2D} satisfy: 
\begin{equation}\label{eq:antidiagonal}
N_1J_{1,2}\!=\!N_2J_{2,1}
\end{equation}
then, the entries of $\mathbf{J}_b$ in \eqref{eq:matriz-J-2D-alfas} meet:
\begin{equation}\label{eq:alfa'_1(b)}
-N_1\alpha'_1(b)=N_2\big(1\!+\!e^b\alpha_2'(b)\big) 
\end{equation}
On the other hand, using the expression of $\mu$ in \eqref{eq:mu} we have:
\begin{align}
E^s=\mbox{Nul}\big(\mathbf{J}_b\!-\!\mu\,\mathbf{I}\big)&= \mbox{Nul}\left(\begin{bmatrix}
 1\!+\!e^b\alpha_2'(b) & -\alpha'_1(b) \\[2mm]
1\!+\!e^b\alpha_2'(b) & -\alpha'_1(b)  
 \end{bmatrix}\right)\\[2mm]
 &=\mbox{span}\left\{ [\alpha'_1(b), 1\!+\!e^b\alpha_2'(b)]\right\}\\
 &=\mbox{span}\big\{[N_2,-N_1]\big\}
\end{align}  
\end{proof}

Then, $E^s\!=\!\mbox{span}\left\{\mathbf{v}\!=\![N_2,-N_1]\right\}$. Note that, in case that $N_1\!=\!N_2$, $E^s$ and $E^c\!=\!\mathcal{S}_b$ become orthogonal.

\smallskip

The strategy for proving Theo. \ref{main-result-2} depends largely on the geometrical interpretation of Lemma \ref{lem:singletransition} which, in the case of two dimensions, means that any unitary straight line is mapped by $\mathbf{f}$ onto another unitary straight line. The idea is to prove that any ${\cal S}_x$ gets closer and closer to the fixed line ${\cal S}_b$ with successive iterations of the map. As they are parallel lines, it is enough to study only the sequence given by the intercepts $x$. First, the following auxiliary lemmas will be useful.

\begin{lemma} For all $x\!\in\!\mathbb{R}$, $\mathbf{f}^{(t)}({\cal S}_x)\to {\cal S}_b$, as $t\to +\infty$    
\end{lemma}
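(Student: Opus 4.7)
The plan is to reduce the convergence of lines under $\mathbf{f}$ to the convergence of a scalar sequence of intercepts under the map $\phi(x)=\log\big(\alpha_1(x)/\alpha_2(x)\big)$ defined in \eqref{eq:updates}. Since ${\cal S}_x$ and ${\cal S}_b$ are parallel unit-slope lines, the natural notion of convergence ${\cal S}_{x_t}\!\to\!{\cal S}_b$ is that the gap between their intercepts tends to zero, i.e. $x_t\!\to\!b$.

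First I would invoke Lemma \ref{lem:singletransition} to observe that $\mathbf{f}$ preserves unit-slope lines: if $\bsy{\beta}\!=\![\lambda,\lambda\!+\!x]\!\in\!{\cal S}_x$, then $\mathbf{f}(\bsy{\beta})\!=\!\mathbf{f}([0,x])\!+\![\lambda,\lambda]$, so $\mathbf{f}({\cal S}_x)$ is again a unit-slope line whose intercept depends only on $x$. Subtracting the two equations of \eqref{eq:algoritmo-2D} evaluated at $[0,x]$ gives precisely $\mathbf{f}({\cal S}_x)\!=\!{\cal S}_{\phi(x)}$, and iterating yields $\mathbf{f}^{(t)}({\cal S}_x)\!=\!{\cal S}_{\phi^{(t)}(x)}$. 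Thus the problem reduces to showing $\phi^{(t)}(x)\!\to\!b$ for every $x\!\in\!\mathbb{R}$.

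Next I would apply Lemma \ref{desigualdades-alfas} to the scalar orbit $x_t\!=\!\phi^{(t)}(x)$. If $x\!>\!b$, then $b\!\leq\!\phi(x)\!<\!x$; by induction $\{x_t\}$ stays in $[b,x]$ and is strictly decreasing (as long as $x_t\!>\!b$), hence convergent to some limit $L\!\geq\!b$. Symmetrically, if $x\!<\!b$, then $\{x_t\}$ is strictly increasing and bounded above by $b$, converging to some $L\!\leq\!b$. By the continuity of $\alpha_1,\alpha_2$ (and their strict positivity, Lemma \ref{monotonía-alfas}), $\phi$ is continuous, so $L$ satisfies $\phi(L)\!=\!L$. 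But Lemma \ref{desigualdades-alfas} shows that $\phi(x)\!<\!x$ whenever $x\!>\!b$ and $\phi(x)\!>\!x$ whenever $x\!<\!b$; therefore $b$ is the unique fixed point of $\phi$, forcing $L\!=\!b$. The case $x\!=\!b$ is trivial since ${\cal S}_b$ consists of fixed points. Hence $\phi^{(t)}(x)\!\to\!b$ and consequently $\mathbf{f}^{(t)}({\cal S}_x)\!\to\!{\cal S}_b$.

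The only genuinely delicate step is the monotone-bounded argument together with the uniqueness of the fixed point of $\phi$, both of which are already packaged in Lemma \ref{desigualdades-alfas}; the rest is just translating line-convergence into intercept-convergence via Lemma \ref{lem:singletransition}. No new estimate is required.
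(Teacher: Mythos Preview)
Your proposal is correct and follows essentially the same route as the paper: reduce the dynamics of the lines ${\cal S}_x$ under $\mathbf{f}$ to the scalar intercept map $\phi$ via Lemma~\ref{lem:singletransition}, then use the monotonicity/bounding from Lemma~\ref{desigualdades-alfas} to conclude that $\phi^{(t)}(x)\to b$. The only difference is cosmetic: the paper simply asserts that the monotone bounded sequence $\{\phi^t(x)\}$ has limit $b$, whereas you spell out the standard closing argument (continuity of $\phi$ plus uniqueness of its fixed point from the strict inequalities in Lemma~\ref{desigualdades-alfas}) --- a welcome bit of extra rigor, but not a different idea.
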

\begin{proof} 
Let $x\!\in\!\mathbb{R}$, and $\bsy{\beta}\!=\![0,x]$. Evaluating $\mathbf{f}$ in $\beta$ in Eq. \eqref{eq:algoritmo-2D}, and using functions $\alpha_k$, we have that:
\begin{equation}
\mathbf{f}\big([0,x]\big)\!=\![-\log\alpha_1(x),-\log\alpha_2(x)]
\end{equation}
so, by Lemma \ref{lem:singletransition}:
\begin{equation}
\mathbf{f}\big([0,x]\!+\!\bsy{\lambda}\big)=\bsy{\lambda}+[-\log\alpha_1(x),-\log\alpha_2(x)]
\end{equation}
which, in terms of Eq. \eqref{eq:recta-x}, means that $\mathbf{f}({\cal S}_x)\!=\!{\cal S}_{\phi(x)}$, where $\phi(x)$ is the function defined in Eq. \eqref{eq:updates}, i.e., the unitary slope straight line with $x$ intercept, is mapped to the unitary slope straight line with $\phi(x)$ intercept (see Fig. \ref{fig:rectas}). Recursively, by applying Lemma \ref{lem:eigenpair} to $\phi(x)\!>\!b$, we have $\mathbf{f}^{(2)}({\cal S}_x)\!=\!{\cal S}_{\phi^{2}(x)}$, where $\phi^{2}\!=\!\phi\circ\phi$, and so on. In other words, for $x\!>\!b$, the sequence of intercepts, $\{\phi^t(x)\}_{t\ge 1}$, is monotone decreasing and:
\begin{equation}
    \lim_{t\to+\infty}\phi^t(x)\!=\!b
\end{equation}
The analogous way, if $x\!<\!b$, ${\cal S}_{\phi(x)}$ will be placed between ${\cal S}_b$ and ${\cal S}_x$, but {\it strictly} above the latter. Following the previous arguments, the succession $\{\phi^t(x)\}_{t\ge 1}$ is monotone increasing and so, it converges to $b$.
\end{proof}
As a natural consequence of this lemma, it is possible to say that the orbit of any initial condition approaches ${\cal S}_b$. 
\begin{coro} For any i.c. $\bsy{\beta}^{[0]}$, $O^+(\bsy{\beta}^{[0]})\to{\cal S}_b$, for $t\to +\infty$     
\end{coro}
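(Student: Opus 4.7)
The plan is to deduce orbit convergence directly from the intercept convergence established in the preceding lemma. Given any initial condition $\bsy{\beta}^{[0]}\!=\![\beta_1^{[0]},\beta_2^{[0]}]$, I would set $x_0\!:=\!\beta_2^{[0]}\!-\!\beta_1^{[0]}$, so that $\bsy{\beta}^{[0]}$ lies on the unitary slope line $\mathcal{S}_{x_0}$ defined in \eqref{eq:recta-x}.

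Next, the proof in the preceding lemma already shows that $\mathbf{f}(\mathcal{S}_x)\!=\!\mathcal{S}_{\phi(x)}$, which is a direct consequence of Lemma \ref{lem:singletransition} together with the definition of $\phi$ in \eqref{eq:updates}. A straightforward induction on $t$ then gives $\mathbf{f}^{(t)}(\bsy{\beta}^{[0]})\!\in\!\mathcal{S}_{\phi^t(x_0)}$ for every $t\!\geq\!0$. By the preceding lemma (which rests on the monotonicity in Lemma \ref{desigualdades-alfas}), the scalar sequence $\{\phi^t(x_0)\}_{t\geq 0}$ converges monotonically to $b$ as $t\!\to\!+\infty$.

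To close the argument, I would invoke the trivial geometric fact that the two parallel lines $\mathcal{S}_{\phi^t(x_0)}$ and $\mathcal{S}_b$, both of slope $1$, lie at constant Euclidean distance $|\phi^t(x_0)-b|/\sqrt{2}$, irrespective of which point is chosen on $\mathcal{S}_{\phi^t(x_0)}$. Hence
$$\mathrm{dist}\big(\mathbf{f}^{(t)}(\bsy{\beta}^{[0]}),\,\mathcal{S}_b\big)\,=\,\frac{|\phi^t(x_0)-b|}{\sqrt{2}}\,\longrightarrow\,0,$$
which is precisely the meaning of $O^+(\bsy{\beta}^{[0]})\!\to\!\mathcal{S}_b$.

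There is no genuine obstacle: the hard part, namely reducing the two-dimensional dynamics to the scalar iteration $x\!\mapsto\!\phi(x)$ and proving the monotone convergence $\phi^t(x)\!\to\!b$, was already carried out in Lemma \ref{desigualdades-alfas} and the preceding lemma. The corollary is essentially a geometric repackaging observing that each orbit is trapped inside a nested sequence of parallel lines collapsing onto $\mathcal{S}_b$.
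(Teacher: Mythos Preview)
Your proposal is correct and follows essentially the same approach as the paper: both arguments simply observe that any initial condition $\bsy{\beta}^{[0]}$ lies on some line $\mathcal{S}_{x_0}$ with $x_0=\beta_2^{[0]}-\beta_1^{[0]}$, and then invoke the preceding lemma to conclude. Your version is in fact more explicit than the paper's one-line proof, as you spell out the induction $\mathbf{f}^{(t)}(\bsy{\beta}^{[0]})\in\mathcal{S}_{\phi^t(x_0)}$ and the distance formula $|\phi^t(x_0)-b|/\sqrt{2}$ between the parallel lines, details the paper leaves implicit.
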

\begin{proof} Indeed, any point, $\bsy{\beta}^{[0]}\!\in\!\mathbb{R}^2$, $\bsy{\beta}^{[0]}\!=\!\big[\beta_1^{[0]},\beta_2^{[0]}\big]$ belongs to ${\cal S}_x$, for $x\!=\!\beta_1^{[0]}\!-\!\beta_1^{[0]}$.    
\end{proof}

\begin{proof} {\it Teo. \ref{main-result-2}}

The latest results show that the line ${\cal S}_b$ is a global asymptotically stable set for $\mathbf{f}$. However, in order to demonstrate the convergence of SUCPA algorithm, it is still necessary to show that the orbit of any i.c. does not approach {\it asymptotically} to ${\cal S}_b$ but, in fact, reaches a point $\bsy{\beta^\ast}$ on it. Although it is already evident that the asymptotic behavior is ruled out by the local dynamics illustrated in Fig. \ref{fig:local-dynamics}, we will also give rigorous argument on this point.

Let $\bsy{\beta}^{[t]}$ and $\bsy{\beta}^{[t+1]}$ be two consecutive points of $O^+(\bsy{\beta}^{[0]})$ and $\Delta_k(t)\!=\!\beta_k^{[t+1]}\!-\!\beta_k^{[t]}$, $k\!=\!1,2$, their increments. 
Then, the slope of the straight line through $\bsy{\beta}^{[t]}$ and $\bsy{\beta}^{[t+1]}$, is $\frac{\Delta_2(t)}{\Delta_1(t)}$.

Proceeding by {\it reductio ad absurdum}, if $O^+(\bsy{\beta}^{[0]})$ where asymptotic to ${\cal S}_b$, then the slopes should approach the unity: 
\begin{equation}\label{eq:incrementos}
\lim_{t\to+\infty}\frac{\Delta_2(t)}{\Delta_1(t)}=1
\end{equation}
Indeed, $O^+(\bsy{\beta}^{[0]})$ cannot cross over the straight line ${\cal S}_b$ due to Cor. \ref{co:semi-planes}, and so the ``oscillating'' asymptotic approach cannot take place. 
Finally, Eq. \eqref{eq:incrementos} is ruled out by Eq. \eqref{eq:clave}, given that the possibility for the increments are two: they are both null ($\Delta_k(t)\!=\!0$, and so, $\bsy{\beta}^{[t]}\!=\!\bsy{\beta}^{[t+1]}$), or they are of different sign, so
 $\dfrac{\Delta_2(t)}{\Delta_1(t)}\!<\!0$, for all $t$, and this is in contradiction to \eqref{eq:incrementos}. 

Thus, having excluded the asymptotic behavior, there must be a fixed point, $\bsy{\beta}^\ast\!\!\in\!{\cal S}_b$, to which the orbit converges, and then, the proof of Theo. \ref{main-result-2}, for $K\!=\!2$, is completed.
\end{proof}

\bigskip
\section{Numerical examples}\label{sec:numerical_examples}

In this section we present three real-world applications, showing the features proved for the $K\!=\!2$ cases and the conjectures made for the $K\!=\!3$ case. The first two examples correspond to language models and the third to an image classification application. Implementation details can be found in the repository of this work\footnote{\texttt{https://github.com/LautaroEst/sucpa-convergence}}.

\subsection{Application to Language models with K=2}
The first example corresponds to the calibration of the posterior probabilities of a language model when it is used as a zero-shot classifier. Specifically, a generative language model is a function that maps a string of characters $x$ (the prompt) to a probability distribution $P_{LM}(t|x)$ over a set of tokens $V\!=\!\{t_1,\ldots,t_{|V|}\}$ (the vocabulary), which represents the probability of the next token after the input string. Language models can be used as classifiers by computing the probability that the next token is the one that represents the class $y$ to be predicted. For instance, we can compute the probability that the words \textit{``positive''} or \textit{``negative''} appear after the sentence \textit{``Identify if the next review has a positive or negative connotation. Review: `I love this movie'. Sentiment:''}. This is an example of zero-shot classification because the prompt does not contain examples of correctly classified phrases. In order to obtain a probability distribution $P(y_k|x)$ for each class $y_k$, we compute the score $s_k\!=\!P_{LM}(w_k|x)$ with a predefined set of tokens (in the example, the words $w_1=${\it ``positive''} and $w_2=${\it ``negative''}) and then we normalized over all the scores:
\begin{equation}
    P(y_k|x) = \frac{s_k}{\sum_{k=1}^K s_k}
\end{equation}
This is a particularly good use-case scenario because these probabilities  produced by the language model are likely to be uncalibrated. For an extended explanation of this problem, see \cite{sucpa_arxiv}.

One of the families of classification problems are those known as \emph{polarity classification} and consists in determining if a given sentence has a positive or negative connotation. For this experiment, a subset of the SST-2 dataset \cite{sst2}, which contains examples of annotated movie reviews  was used. The number of positive and negative samples was $N_1\!=\!1729$ and $N_2\!=\!2271$, respectively. To obtain the class scores, the GPT-2 model \cite{gpt2} was used. 

Using the above values of $N_1$ and $N_2$ and the probabilities output by the model, a line ${\cal S}_b$ of fixed points with $b\!=\!-1.39726$ is obtained. 
Fig.~\ref{fig:ej-1-2D} shows this line and the orbits of five different i.c.: $\bsy{\beta}^{[0]}_1\!=\![0,2]$ (in red), $\bsy{\beta}^{[0]}_2\!=\![1.5,3.5]$ (blue), $\bsy{\beta}^{[0]}_3\!=\![3,0]$ (cyan), $\bsy{\beta}^{[0]}_4\!=\![4,-1]$ (green) and $\bsy{\beta}^{[0]}_5\!=\![5,1]$ (magenta). It can be appreciated that orbits starting at on one side of the line ${\cal S}_b$, converge to it on the same side, without crossing the semi-planes. Also, Lemma \ref{lem:singletransition} and Corollary \ref{recta-S} are illustrated: The blue orbit initialized at  $\bsy{\beta}^{[0]}_2\!=\!\bsy{\beta}^{[0]}_1\!+\!\bsy{\lambda}$, i.e. shifted from the red one (in the straight line direction), converges with the same shift, to $\bsy{\beta^*_2}\!=\!\bsy{\beta^*_1}\!+\!\bsy{\lambda}$. In addition, the convergence of the algorithm is really fast, managing to converge in at most 5 iterations.

\begin{figure}[ht]
\centering\includegraphics[scale=0.4]{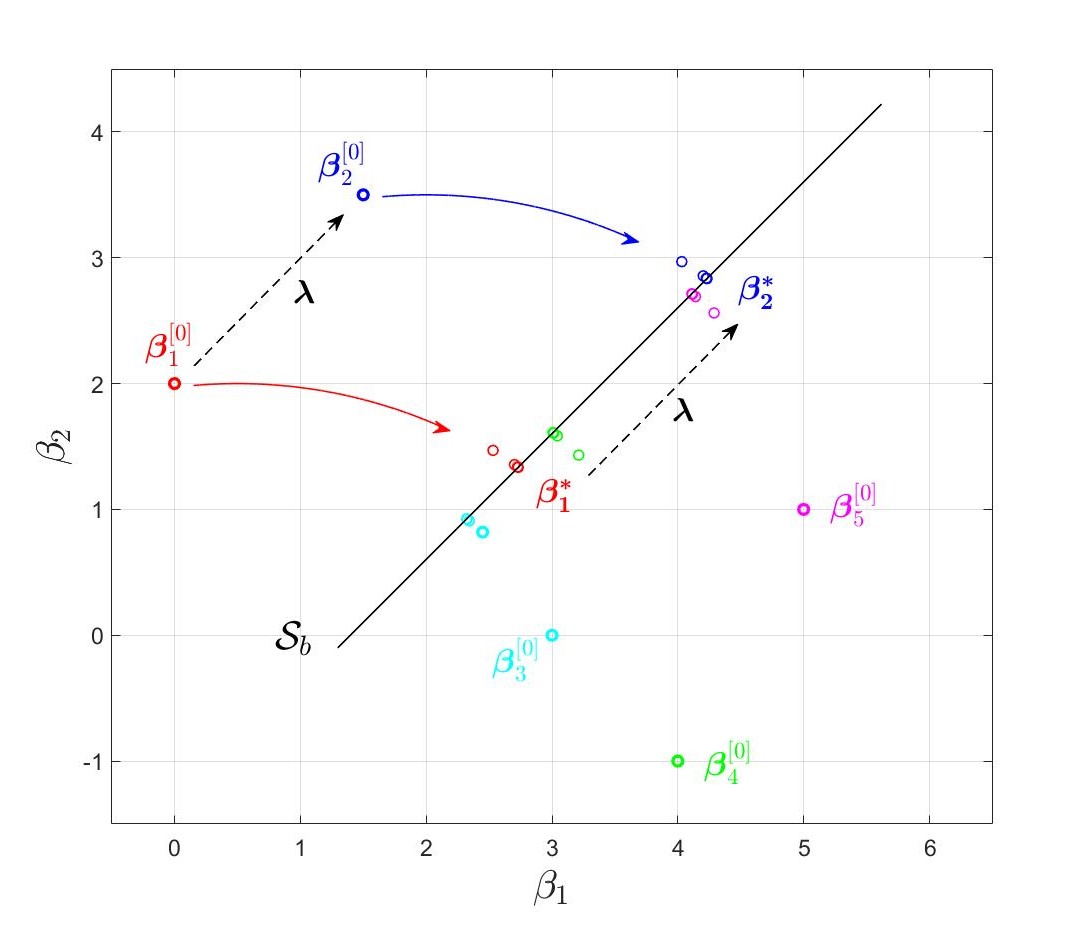}
  \caption{Example with $K\!=\!2$. Orbits of five different i.c.: $\bsy{\beta}^{[0]}_1\!=\![0,2]$ (red), $\bsy{\beta}^{[0]}_2\!=\![1.5,3.5]$ (blue), $\bsy{\beta}^{[0]}_3\!=\![3,0]$ (cyan), $\bsy{\beta}^{[0]}_4\!=\![4,-1]$ (green) and $\bsy{\beta}^{[0]}_5\!=\![5,1]$ (magenta). Only five points of each orbit were plotted due to rapid convergence. Also the line of fixed points, ${\cal S}_b$, with $b\!=\!-1.39726$. The shift vector is $\bsy{\lambda}\!=\![1.5,1.5]$}
  \label{fig:ej-1-2D}
\end{figure}

\subsection{Application to Language models with K=3}

For the second example, we repeat the same procedure for a classification task with $K\!=\!3$. We choose the MNLI dataset \cite{mnli}, which contains examples of sentence pairs with textual entailment annotations. Specifically, given a premise sentence and a hypothesis sentence, the task is to predict whether the premise entails the hypothesis (entailment), contradicts the hypothesis (contradiction), or neither (neutral). The prompt template used for this task was the following: \textit{``Premise: }\texttt{[premise]}\textit{. Hypothesis: }\texttt{[hypothesis]}\textit{. Entailment, neutral or contradiction? Answer:''} and the model used to obtain the probabilities was the same as the first experiment.

In this case, the parameter values are $N_1\!=\!1407$, $N_2\!=\!1298$ and $N_3\!=\!1295$. Fig. \ref{fig:ej-3-3D} shows the line of fixed points, ${\cal S}(\bsy{\beta^*})$, with direction vector $[1,1,1]$; the orbits of five different i.c.: $\bsy{\beta}^{[0]}\!=\![1,-1,1]$ (red), $\bsy{\beta}^{[0]}\!=\![2,-2,2]$ (blue), $\bsy{\beta}^{[0]}\!=\![2.5,-2,1]$ (magenta), $\bsy{\beta}^{[0]}\!=\![2.5,-0.5,1]$ (green) and $\bsy{\beta}^{[0]}\!=\![1,-1.5,1.5]$ (cyan).

\begin{figure}[ht]
\centering\includegraphics[scale=0.4]{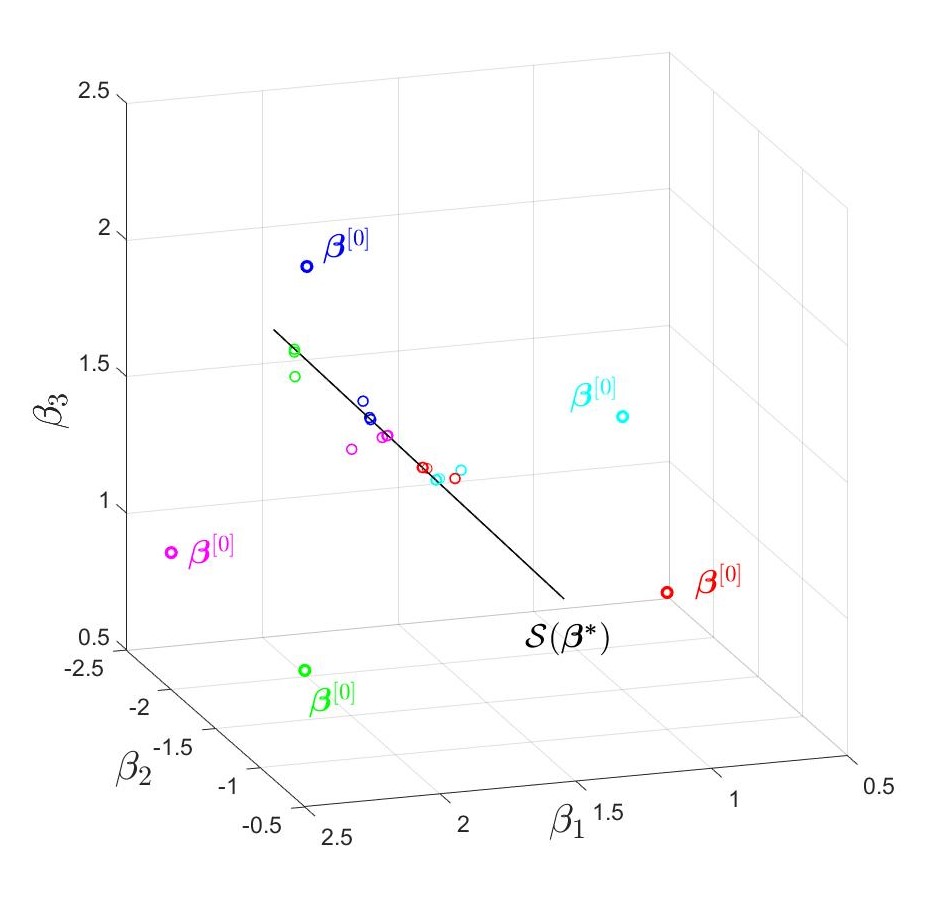}
  \caption{Example with $K\!=\!3$. Orbits of five different i.c.: $\bsy{\beta}^{[0]}\!=\![1,-1,1]$ (red), $\bsy{\beta}^{[0]}\!=\![2,-2,2]$ (blue), $\bsy{\beta}^{[0]}\!=\![2.5,-2,1]$ (magenta), $\bsy{\beta}^{[0]}\!=\![2.5,-0.5,1]$ (green) and $\bsy{\beta}^{[0]}\!=\![1,-1.5,1.5]$ (cyan). Five points of each orbit are plotted. Also the line of fixed points, ${\cal S}(\bsy{\beta^*})$}
  \label{fig:ej-3-3D}
\end{figure}

As we mention in Conj. \ref{conj:convergencia} and \ref{conj:unicidad}, all i.c. converges to the unique straight line of unitary slope. This example would seem to indicate that the characteristics shown for $K\!=\!2$ can be extended to a greater number of classes. In addition, the convergence of the algorithm is really fast, managing to converge in at most 5 iterations.

\subsection{Application to Image Classification}\label{sec:image-classification}

The final example shows that the features shown in the previous examples occur in other completely different applications. In this case, a recognizer of images of dogs and cats was trained on the Asirra dataset \cite{dogs-vs-cats} obtained from the Dogs-vs-Cats Kaggle competition website\footnote{\texttt{https://www.kaggle.com/c/dogs-vs-cats}}. Here, the pretrained ResNet-18 \cite{resnet} model was fine-tuned on 80\% of the training data and validated on the rest. A final accuracy of 0.98 on the validation split was obtained after one epoch using Adam optimization with defaults parameters, a learning rate of 1e-4 and batch size of 64.
\begin{figure}[ht]
\centering\includegraphics[scale=0.4]{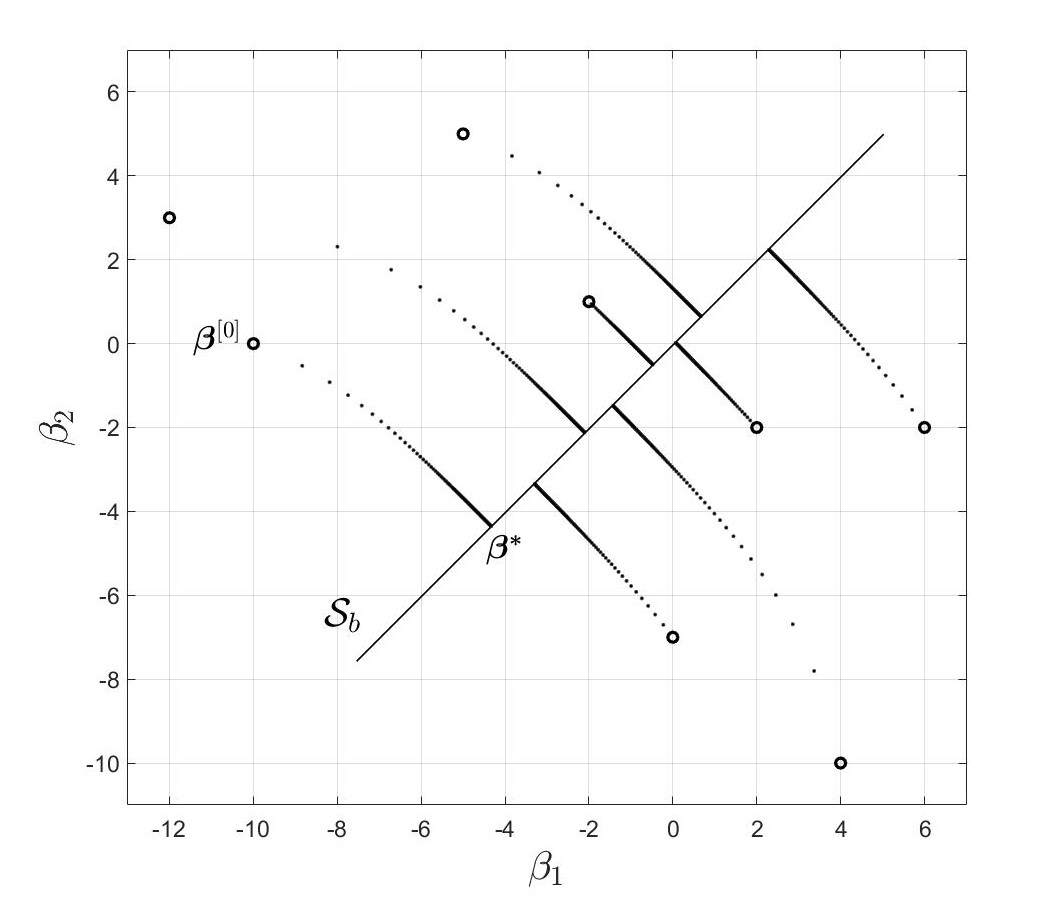}
  \caption{Example with $K\!=\!2$ in an image classification task. Orbits of eight different i.c. are plotted. At least $200$ points of each one were needed to reach the corresponding limit point $\bsy{\beta^*}$}
  \label{fig:ej-dogvscat}
\end{figure}

In this case, the parameter values are $N_1\!=\!10053$, $N_2\!=\!9947$, and $b\!=\!-0.03465$ is obtained. Fig. \ref{fig:ej-dogvscat} shows the line of fixed points and the orbits of different i.c. The same conclusions as in the previous cases can be seen in this example with the difference that up to $200$ steps were needed to achieve convergence. 


\section{Conclusions}\label{sec:conclutions}

In this work we proved several convergences properties of SUCPA, a calibration algorithm described by a non-hyperbolic {map with a non-bounded set of non-isolated fixed points}. Among them it is worth highlighting, on the one hand, the main results: Corollary \ref{recta-S} and Lemma \ref{transition-matrix}. The first one {establishes} that if there exists a fixed point, {then, all points in the straight line of direction $[1,\hdots,1]$ and passing through it must be also fixed points of the map (an unbounded set of non-isolated fixed points)}; while the second one shows that the Jacobian matrix of the system is a regular transition probability matrix (for all $\bsy{\beta}\!\in\!\mathbb{R}^K$) which makes it a non-hyperbolic problem. 

On the other hand, stronger conclusions were proved for the two--classes case {by means of Theo. \ref{main-result-1} which proves the existence and uniqueness of the fixed points straight line}, and Theo. \ref{main-result-2} which shows that every i.c. converges. These results seem to be valid for the general case, but it remains as future work to prove it formally. Additionally, {interesting real-world} application examples were presented in language models and image classification {for which} all the mentioned {results} are experimentally corroborated.

Another possible study to continue this work is the sensitivity of the priors. SUCPA algorithm greatly degrades its performance when the proportion of labels is poorly estimated. Perhaps this phenomenon can be explained from the point of view of convergence, studying how the fixed points change according to variations in the $N_k$ values. This sensitivity analysis is under study and will be reported elsewhere.

\bigskip
\section{Acknowledgments}
This work was supported by the University of Buenos Aires (grant UBACYT 20020220200045BA, 20020220400162BA and 20020190100032BA), CONICET (grant PIP 11220200101826CO) and the ANPCyT (grant PICT 2020-01336).


\bigskip\bigskip\bigskip\bigskip

\bibliographystyle{mystyle}
\bibliography{bibfile}

\bigskip
\begin{figure}[ht!]
\noindent\rule{\textwidth}{0.5pt}\\[-.8\baselineskip]\rule{\textwidth}{0.5pt}\\

\begin{minipage}[b]{0.5\linewidth}
\vspace{0.1cm}
\qquad\qquad\qquad\epsfig{file=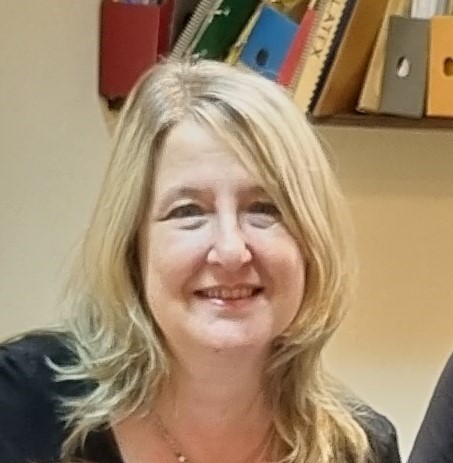,height=1.71in,width=1.6in} 
\end{minipage}
\begin{minipage}[b]{7cm}
\bigskip \bigskip\footnotesize{\textbf{Roberta Hansen}
       - graduated from Faculty of Exact and Natural Sciences of the University of Buenos Aires. She received her Ph.D. degree in Mathematics from the same university in 2009. At the present she is Associate Professor at the Department of Mathematics of the Faculty of Engineering, UBA. and a member of its Nonlinear Dynamics Group. Dr. Hansen's research interests also include fractal geometry, multifractality and complexity in time series.}\end{minipage}\\
\end{figure}
\bigskip
\begin{figure}[ht!]
\begin{minipage}[b]{0.5\linewidth}
\vspace{0.1cm}
\qquad\qquad\qquad\epsfig{file=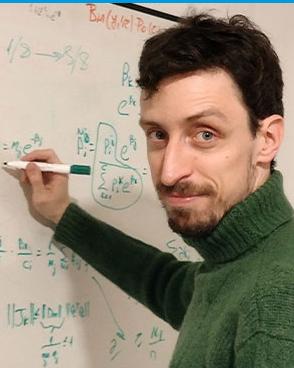,height=1.71in,width=1.6in}
\end{minipage}
\begin{minipage}[b]{7cm}
\bigskip \bigskip\footnotesize{\textbf{Matias Vera}
        - graduated from Engineering School of the University of Buenos Aires in 2014. He received his Ph.D. degree on Engineering (summa cum laude) from the University of Buenos Aires in 2020. Presently, he is an Associate Professor at the University of Buenos Aires and member of the National Scientific and Technical Research Council in Argentina. Dr. Vera's research interests include machine learning, information theory and statistics.}\end{minipage}\\

\end{figure}
\bigskip
\begin{figure}[ht!]
\begin{minipage}[b]{0.5\linewidth}
\vspace{0.1cm}
\qquad\qquad\qquad\epsfig{file=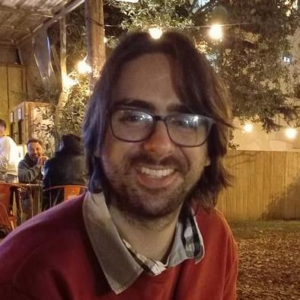,height=1.71in,width=1.6in} 
\end{minipage}
\begin{minipage}[b]{7cm}
\bigskip \bigskip\footnotesize{\textbf{Lautaro Estienne}
        - graduated from Engineering School of the University of Buenos Aires in 2022. He is currently a Ph.D. student at the Engineering School of the University of Buenos Aires and works in Probabilistic Machine Learning, Natural Language and Speech Processing and Calibration.}\end{minipage}\\
\end{figure}
\bigskip
\begin{figure}[ht!]

\begin{minipage}[b]{0.5\linewidth}
\vspace{0.1cm}
\qquad\qquad\qquad\epsfig{file=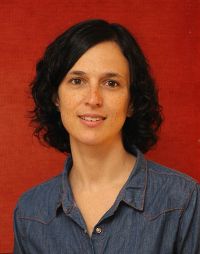,width=1.6in} 
\end{minipage}
\begin{minipage}[b]{7cm}
\bigskip \bigskip\footnotesize{\textbf{Luciana Ferrer}
       - graduated from Engineering School of the University of Buenos Aires, Argentina, in 2001. She received her Ph.D. degree in Electronic Engineering from Stanford University, USA, in 2009. She is currently a researcher at the Computer Science Institute, affiliated to the University of Buenos Aires (UBA) and the National Scientific and Technical Research Council (CONICET), Argentina. Her primary research area is machine learning applied to speech, audio, and language processing tasks, with special focus on the problem of score calibration and robustness to domain mismatch. }\end{minipage}\\
\end{figure}
\bigskip
\begin{figure}[ht!]

\begin{minipage}[b]{0.5\linewidth}
\vspace{0.1cm}
\qquad\qquad\qquad\epsfig{file=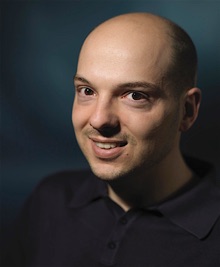,height=1.71in,width=1.6in}
\end{minipage}
\begin{minipage}[b]{7cm}
\bigskip \bigskip\footnotesize{\textbf{Pablo Piantanida}
received the  B.Sc.  degree in electrical engineering and the M.Sc. degree from the University of Buenos Aires, Argentina,   in   2003,   and the   Ph.D.   degree   from Université Paris-Sud, Orsay, France, in 2007.  He is currently director of the International Laboratory and Learning Systems (ILLS), professor at CentraleSupélec (Université Paris-Saclay) with CNRS and Associate Academic Member Mila - Quebec AI Institute (Mila). From 2019 to  2021, he was also an associate member of Comète – Inria research team (Lix - Ecole Polytechnique). In 2018 and 2019, he was vising professor at the Université de Montréal and Laboratoire de Mathématiques d'Orsay (LMO). His research interests include information theory, machine learning, security of learning systems and the secure processing of information and applications to computer vision, health, natural language processing, among others. He has served as the General Co-Chair for the 2019 IEEE  International  Symposium on  Information  Theory  (ISIT).  He served as an  Associate  Editor for the  IEEE  TRANSACTIONS  ON  INFORMATION FORENSICS AND SECURITY and Editorial Board of Section "Information Theory, Probability and Statistics" for Entropy. He is member of the IEEE Information Theory Society Conference Committee. 
}\end{minipage}\\
\end{figure}
\end{document}